\def\eqref#1{equation~\ref{#1}}
\def\1{\bm{1}}
\DeclareMathAlphabet{\mathsfit}{\encodingdefault}{\sfdefault}{m}{sl}
\SetMathAlphabet{\mathsfit}{bold}{\encodingdefault}{\sfdefault}{bx}{n}
\newcommand{\E}{\mathbb{E}}
\newcommand{\R}{\mathbb{R}}
\setlist{nosep}
\newtheorem{theorem}{Theorem}
\newtheorem{definition}{Definition}
\theoremstyle{remark}
\theoremstyle{definition}
\renewcommand{\P}{\mathbb{P}}
\renewcommand{\E}{\mathbb{E}}
\newcommand{\V}{\mathbb{V}}
\newcommand{\reals}{\mathbb{R}}
\newcommand{\token}[1]{\texttt{\textless}#1\texttt{\textgreater}}
\newcommand{\red}[1]{\textcolor{red}{#1}}
\title{Decoding-based Regression}
\author{Xingyou Song$^{\ast}$}
\author{Dara Bahri$^{\ast}$}
\affil{Google DeepMind}
\begin{abstract}
Language models have recently been shown capable of performing regression wherein numeric predictions are represented as decoded strings. In this work, we provide theoretical grounds for this capability and furthermore investigate the utility of causal sequence decoding models as numeric regression heads given any feature representation. We find that, despite being trained in the usual way - for next-token prediction via cross-entropy loss - decoder-based heads are as performant as standard pointwise heads when benchmarked over standard regression tasks, while being flexible enough to capture smooth numeric distributions, such as in the task of density estimation.
\end{abstract}
\begin{document}
\maketitle

\section{Introduction}
\begin{wrapfigure}{R}{0.5\textwidth}
    \centering
    \includegraphics[width=0.49\textwidth]{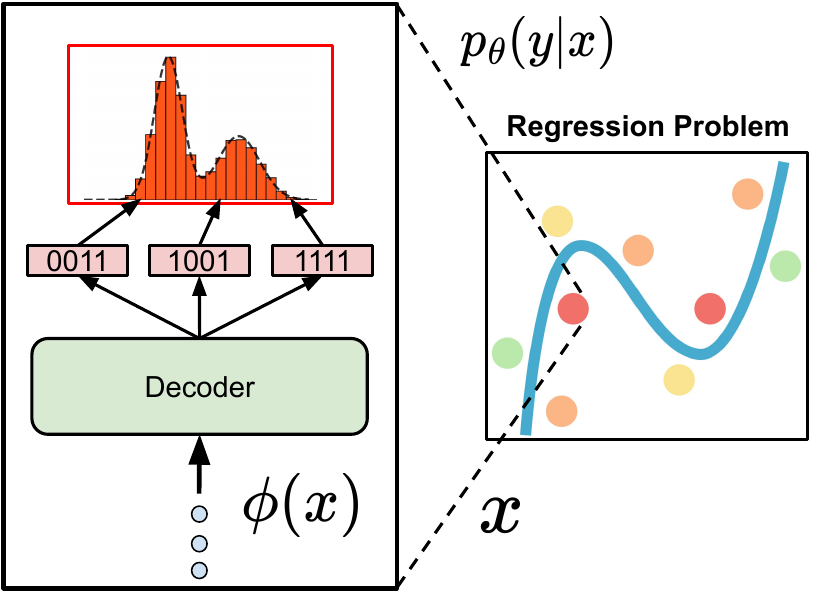}
    \caption{Given any feature representation $\phi(x)$, we attach a decoding-based head to output predictive distribution $p_\theta(y|x)$.}
    \label{fig:intro}
\end{wrapfigure}

Despite being originally developed for the purpose of text generation and chat applications, large language models (LLMs) have recently been applied for new applications, particularly one of which is regression, and more broadly the prediction of numeric outcomes. \citet{icl_regression} have shown service-based LLMs such as ChatGPT and Gemini capable of regression with performance comparable to that of traditional regression methods such as random forests, while \citet{omnipred, performance_prediction} have shown smaller custom language models can be trained specifically on multiple regression tasks for transfer learning.

For an input-output pair $(x,y)$, where $x$ is a feature vector and $y$ is a real number, a regression model's performance is determined by two factors: how it processes $x$ and how its output ``head'' represents $y$. While the mentioned works \citep{icl_regression, omnipred, performance_prediction} can be seen as \textit{text-to-text regression} where both $x$ and $y$ are represented as tokens, this combination is not necessarily required. \citet{llm_embeddings} investigate the isolated case where LLM embeddings of $x$ are attached to feed-forward networks as regression heads, while \citet{llm_embeddings_neural_process} investigate the case when these embeddings are eventually attached to Gaussian distribution heads. Both can be seen as particular instances when $x$ is represented as text, while common regression heads are still used. However, there has not been work investigating the inverse situation, i.e. $y$ is represented as text or structured tokens. One could do so by using decoding-based regression heads, where for example, tokens \texttt{\token{1}\token{.}\token{2}\token{3}} can be decoded to represent 1.23, a technique used in several works training language models for specific numeric tasks, such as arithmetic \citep{arithmetic_serialization}, linear algebra \citep{linear_algebra_transformer}, and symbolic regression \citep{symbolic_regression_recurrent}.

In contrast to traditional deterministic or parametric distribution (e.g. Gaussian) regression heads, decoding-based heads may be much more flexible, as they can represent any numeric distribution approximately over $\mathbb{R}$ without the need for explicit normalization. However, due to their initial lack of inductive bias over numeric distances, numeric token representations and sequential dependencies may need to be \textit{learned} using additional training data, and thus it is worth empirically investigating these trade-offs in isolated, controlled experiments. Our research provides valuable insights on using numbers as an output modality of token-based autoregressive decoders. Our specific contributions and findings are thus as follows:
\begin{itemize}
\item We formalize decoding-based regression, i.e. explicitly define tokenization schemes for numbers, establish training and inference procedures, discuss methods for pointwise estimation, and theoretically provide risk guarantees for density estimation under common assumptions.
\item In experimental benchmarks, we find that properly tuned decoding-based regression heads are data-efficient and competitive with regular pointwise heads on tabular regression, yet are also expressive enough to perform against Gaussian mixture heads for density estimation.
\end{itemize}

\section{Related Work and Motivation}
The idea of \textit{text-to-text regression} is especially relevant as LLMs are currently being fine-tuned as ``Generative Reward Models'' \citep{gen_rm, gen_rm2}, i.e. end-to-end scoring methods for reinforcement learning feedback \citep{rlhf, rlaif} or LLM-as-a-Judge \citep{llm_judge}. Such reward modeling methods can be simpler than other forms such as Bradley-Terry \citep{bradley_terry} which requires appending additional prediction heads and custom losses. However, little analysis has been done in isolation on the theoretical and modeling capabilities of using text, or more generally tokens, to represent real numbers. Understandably, one could argue that regular supervised fine-tuning over numbers represented as strings is unprincipled, considering that there is no notion of numeric distance when using cross-entropy loss.

However, we argue that token-based numeric modeling is actually natural, given observed phenomena and techniques proposed in recent works. Given a post-processed representation $\phi(x) \in \mathbb{R}^d$ after $x$ is sent through a task-specific encoder (MLP, CNN, etc.), we provide an overview of common regression heads $p_{\theta}(y|\phi(x))$ with trainable parameters $\theta$, which can be applied on top of $\phi(x)$ to return numeric outputs (additional techniques in Section \ref{sec:discussion}).

\textbf{Pointwise Heads:} By far, the most commonly used regression head is a learnable deterministic function with weights $\theta$, typically a simple feed-forward network (often a single linear layer) mapping $\phi(x)$ to a single-point scalar, trained by minimizing a pointwise loss like mean squared error. To allow stability during training, the $y$-values must be normalized space, e.g. within $[0,1]$.

%with optional transformations that may be performed afterwards; e.g. \citet{bradley_terry} can be seen as appending an additional sigmoid head to enforce the raw outputs to be within $[0,1]$. 

\textbf{Parameteric Distribution Heads:} In the case of probabilistic outputs, one may apply distributions with parametric forms. A common example is a Gaussian head, e.g. $p_{\theta}(y|\phi(x)) = \mathcal{N}(\mu, \sigma^2)$ where $\mu, \sigma$ are deterministic learnable functions of $\phi(x)$. A more flexible variant is a finite mixture of Gaussians \citep{mdn}, which can be extended to infinite mixtures \citep{infinite_gaussian_mixture}. Such mixture techniques can be more broadly seen within the realm of density estimation \citep{density_estimation_parzen, density_estimation_rosenblatt} in which a complex distribution may be estimated using multiple simpler basis distributions.

\textbf{Histogram (Riemann) Distribution Heads:} One such basis common in deep learning applications is the piece-wise constant basis, for learning histograms over a finite support set $\{y_1, \ldots, y_n\} \subset \mathbb{R}$ via softmax parametrization, i.e. $p_{\theta}(y_{i} | \phi(x)) = \text{Softmax}^{(i)}(\phi(x)^{T} \cdot \theta)$ where $\theta \in \mathbb{R}^{n}$, which has led to strong results in value-based reinforcement learning \citep{distributional_losses, rl_distributional_value} and tabular data \citep{tabpfn, optformer}. However, a drawback is that learning numeric distances between all of the bins requires more data as the size of the vocabulary increases. We term these as \textit{Riemann} heads, following \citep{tabpfn}.

While there have been works on ordinal regression to learn rankings among these bins, such as using rank-consistency \citep{rank_consistent_ordinal_regression} and soft labels / metric-awareness \citep{soft_labels_ordinal_regression}, we propose a much simpler way, by simply considering the histogram distribution as a special case of decoding a sequence of length 1. By extending the sequence length instead, there can be an exponential reduction in bin count -- e.g. 1000 (=$10^3) $ bins can be expressed instead using 10 bins and 3 decoding steps. While this intuitive idea has been studied in extreme classification problems \citep{extreme_softmax_classification}, it has not been thoroughly examined for numeric regression, which is the focus of our work. Below, we introduce decoding-based regression heads.

\textbf{Note:} To prevent confusion with the term ``decoder'' which is also a central component of generative models like Variational Autoencoders (VAEs) \citep{vae}, we stress a key distinction. While both VAE decoders and distributional regression heads map a feature vector to a probability distribution, their objectives differ: a VAE decoder models $p(x|\phi(x))$ to reconstruct the input $x$ from a latent $\phi(x)$, whereas a regression head models $p(y|\phi(x))$ to predict a separate target y. Given this difference in the output space, we avoid referring to standard distributional heads (e.g., Gaussian) as ``decoders''. 

\section{Decoding-Based Regression}
In this work, we define the decoder head as an autoregressive sequence model, such as a compact Transformer decoder. Given a vocabulary $\mathcal{V}$, the Transformer takes the feature vector $\phi(x)$ as its initial context, and generates a discrete sequence of tokens $(t_1, \ldots, t_K) \in \mathcal{V}^K$ one token at a time. This sequence is a string representation of a number, and by modeling the probability $p_\theta(t_1, \ldots, t_K \mid \phi(x)) = \prod_{k=1}^{K} p_\theta(t_{k} \mid \phi(x), t_{1}\ldots t_{k-1})$, the head implicitly defines a probability distribution over a discrete set of representable numbers. Below, we discuss natural token representations of numbers.

\subsection{Numeric Token Representations}
\textbf{Normalized Tokenization:} If $y$ is restricted to $[0,1]$ (via scale normalization for example), then in Section \ref{subsec:density_estimation_theory} we show any smooth density $p(y | \phi(x))$ can be approximated with an increasing level of granularity as more tokens are used in the numeric representation, under some ``universality'' assumptions on $p_\theta$. This can be seen intuitively with a tree-based construction, i.e. for a base $B$, the vocabulary contains \token{0}, \token{1}, \ldots, \token{$B-1$}, and $y$ is simply represented by its base-$B$ expansion up to a length $K$. This setting aligns with standard data-science practices of also normalizing $y$-values according to training data or known bounds.

\textbf{Unnormalized Tokenization:} However, there are cases in which we would like to use an \textit{unnormalized} tokenization scheme. Such cases include multi-task regression \citep{omnipred}, in which different tasks may have varying $y$-scales, or express very wide $y$-ranges for which appropriately normalizing $y$-values for the correct balance between numeric stability and expressiveness would be very tedious.

In this case, we may simply view normalized tokenizations as ``mantissas'' and then left-append sign and exponent tokens to form a base-$B$ generalization of the common IEEE-754 floating point representation \citep{ieee754}. %This can be constructed by observing that each number can be represented as $s \cdot B^{e} \cdot m$ where $s \in \{-1, +1\}$ is the sign, $e \in \mathbb{Z}$ is the exponent, and $m \in [0, B)$ is the mantissa. 
Given length parameters $E$ and $M$, our tokenization is therefore \token{$s$}\token{$s_e$}\token{$e_1$}\ldots\token{$e_{E}$}\token{$m_1$}\ldots\token{$m_M$} where $s_{e}, e_1, e_2, \ldots, e_E$ are the sign and base-B representation of the exponent and $m_1, m_2, \ldots, m_M$ are the most significant base-B digits of the mantissa. E.g. if ($B$=10, $E$=3, $M$=4), then $10^{-222} \times 1.23456789$ will be represented as \token{+}\token{-}\token{2}\token{2}\token{2}\token{1}\token{2}\token{3}\token{4}. Signs \token{$s$}, \token{$s_e$} can have their own dedicated \token{-}, \token{+} tokens or optionally reuse the \token{0}$, $\token{1} tokens from $\mathcal{V}$; this made little difference in results.

Note that the vocabulary can additionally contain ``special'' tokens for representing outputs not within a supported numeric range. For example, one can use a token \token{NaN} to represent non-numbers, commonly used in cases where $x$ may be an invalid input. We mention such cases for useful downstream applications, although the scope of this paper assumes $y$ is always within $\mathbb{R}$.

\textbf{Architecture:} Any autoregressive model can be used, so long as it supports constrained token decoding to enforce proper sequences which represent a valid number. By default, we use a small Transformer \citep{transformer} due to its strong autoregression capabilities, with the initial token embedding as $\phi(x)$. As we show in our experiment section, this Transformer size can be made minimal, with negligible contribution to parameter count in comparison to the encoder.

Since the token space is finite while $\mathbb{R}$ is uncountable, this mapping is lossy (i.e. not invertible) and introduces a notion of \emph{rounding error}. However, for large enough base $B$ and sequence lengths (both normalized and unnormalized), practically any $y$-value will be within the expressible range and rounding errors will be minimal. The trade-off is that the vocabulary size and sequential dependencies between tokens will also increase, and learning better numeric representations may thus require more training data. While it's possible to first pretrain these numeric representations as in \citet{tabpfn} for the histogram distribution, we show that with proper hyperparameter tuning, the Transformer decoder can be used out-of-the-box as a randomly initialized regression head.

\subsection{Pointwise Estimation}
In many cases, one may only be interested in a scalar quantity of interest $M(p_\theta)$ of the model's distribution (e.g. its mean). If $p_\theta$ matches the true distribution $p$ perfectly, then for a given pointwise loss $\ell: \reals^2 \rightarrow \reals$ the goal is then to select $M(p)$ which minimizes $\mathbb{E}_{y \sim p(\cdot | x)} \left[\ell(M(p), y) \right]$. It is well established that for common error functions like L2, L1, and L0, the optimal values are the mean, median, and mode of $p(\cdot | x)$, respectively. \citet{regression_aware} also leverage this observation to enhance LLM decoding.  

To estimate these $M(p)$, the mode can be approximated using e.g. beam search~\citep{beam_search}, but efficiently estimating other common general pointwise representatives $M(p)$ from pure temperature samples is a broad topic - for example, one can efficiently approximate the true median from the Harrell-Davis estimator~\citep{median_estimator}, and more generally we refer the reader to \citet{point_estimation} on statistical point estimators.

Especially for unnormalized tokenization, additional care needs to be taken, since in practice, the model can have a miniscule but non-zero probability of decoding an arbitrarily large outlier, even if the underlying true distribution is bounded. Such outliers can easily sway non-robust estimators such as the sample mean, as observed in \citet{omnipred}. This issue fundamentally comes from the fact that some tokens are more significant than others, prompting the use of alternative tokenizations based on coding theory which are robust to corruptions, which we show can be effective in our experiment section. 

Alternatively, decoding techniques from the LLM literature can also be used, e.g. top-$k$~\citep{top_k} or top-$p$~\citep{top_p}, or even simply decreasing the temperature to increase model confidence and thereby filter out possible outliers. One can also avoid decoding altogether and use recently proposed RAFT~\citep{raft, tract} which estimates $M(p)$ using a query-based approach using a finite fixed evaluation set $\mathcal{Y}$, e.g. for mean, $\mathbb{E}_{y\sim p_{\theta}}[y] \approx \frac{1}{N} \sum_{y' \in \mathcal{Y}} p_{\theta}(y') \cdot y'$, although the choice of $\mathcal{Y}$ may be non-trivial to obtain an unbiased estimate, especially over unnormalized tokenizations. This may also defeat the purpose of using a decoding head, which offers several density estimation benefits, as we discuss below. Overall, the choice of method for computing pointwise representations we leave as a hyperparameter to be tuned depending on the application.

\subsection{Density Estimation and Theory}
\label{subsec:density_estimation_theory}
During training, to allow the full probabilistic modeling benefits of using a decoding head, we apply the standard cross-entropy loss over all sequence tokens. For a model $p_{\theta}$ and target $y = (t_1, \ldots, t_K)$, the cross-entropy loss (omitting $x$ to simplify notation) will be: \begin{align*} H(y, p_{\theta}) &= \sum_{k=1}^{K}\sum_{\widehat{t}_k \in \mathcal{V}} -\mathbbm{1}(\widehat{t}_k = t_k) \log p_{\theta}(\widehat{t}_k | t_{1}, \ldots, t_{k-1}) \end{align*} The expected loss over all $y$ sampled from the true distribution is then $\mathbb{E}_{y \sim p} \left[H(y, p_{\theta})\right]$.

Given our tree-based tokenization and training loss, we provide formal guarantees for estimating one-dimensional densities on $[0,1]$. Note that densities with finite support can be shifted and rescaled to have support in $[0,1]$. Define $\lambda_k: [0, 1) \rightarrow \{0, 1\}^k$ be the operation that returns the first $k$ bits after the radix point in the (possibly infinite) binary representation of $y$. Concretely, if $y = 0.b_1b_2b_3b_4...$ then $\lambda_k(y) = (b_1, \dots, b_k)$. We abuse notation and interpret $\lambda_k$'s output either as a sequence or as the real number it represents ($\sum_{i=1}^k b_i 2^{-i}$) depending on the context. The analysis is presented using binary (base-2) representations (e.g. $\mathcal{V} = \{0, 1\}$) for simplicity, but it holds for arbitrary bases. First, we provide an assumption on the learnability of our model and additional definitions:
\begin{definition}[$K$-bit universality]\label{def:universality}
Let $H(p, q) = \E_{y\sim p} -\log q(y)$ denote the cross-entropy between discrete distributions $p$ and $q$. Note that $H(p, p)$ is just the Shannon entropy of $p$. Call parametric model $p_\theta$ $K$-bit universal if for all discrete distributions $p$ on $K$-bit strings (equivalently, $2^K$ categories), 
\begin{align*}
    \min_\theta H(p, p_\theta) = H(p, p).
\end{align*}
In other words, $p_\theta$ is $K$-bit universal if it is flexible enough to fit any discrete distribution on $2^K$ categories.
\end{definition}

\begin{definition}
Define $p_\theta^k$ as probability of the first $k$ bits under $p_\theta$, marginalizing out the remaining bits. Concretely,
\begin{align*}
p_\theta^k((b_1, \dots, b_k)) = \sum_{\{b_{k+1},\dots,b_K\}} p_\theta((b_1, \dots, b_K)).
\end{align*}
Seen another way, $p_\theta^k$ is the distribution over $k$-bit strings that results from auto-regressive decoding $p_\theta$ for exactly $k$ steps.
\end{definition}
\begin{definition}
Let $f:[0,1] \rightarrow \R$ be a density function. With $\{Y_1, \dots, Y_N\}$ as i.i.d. draws from $f$, define $\theta^{*}$ as the maximum likelihood estimator on the truncated sequence of $K$ bits. Concretely,
\begin{align*}
\theta^{*}(Y_1, \dots, Y_N) = \mathop{\operatorname{argmin}}_\theta \frac{1}{N} \sum_{n=1}^N -\log p_\theta(\lambda_K(Y_n)).
\end{align*}
Define \textbf{risk} as the \textbf{mean integrated squared error} between true density $f$ and an estimator $\widehat{f}_N(Y_1,\dots,Y_N)$:
\begin{align*}
R(f, \widehat{f}_N) = \mathop{\E}_{Y_1,\dots, Y_N \sim f} \int_0^1 \left(f(y)-\widehat{f}_N(y)\right)^2 dy.
\end{align*} 
\end{definition}
We now give our main result below, expressing the distributional fit in terms of bias and variance. The proof is deferred to Appendix \ref{appendix:theory}.
\begin{theorem} \label{theo:histogram}
Assume our decoding-based regression model $p_\theta: \{0,1\}^K \rightarrow \Delta^{2^K}$  is $K$-bit universal, and $f$ be any twice continuously differentiable density function. If the maximum likelihood estimator at $k$ is $f_{N}^{k*}(y) = 2^k p_{\theta^{*}(Y_1,\dots,Y_N)}^k(\lambda_k(y)) \;\text{for}\;y \in [0,1]$, then the risk can be exactly computed:
\begin{align*}
R\left(f,f_{N}^{k*}(y)\right) &= \underbrace{\frac{2^{-2k}}{12} \int_0^1 f'(y)^2 dy}_{Bias} \> \> + \underbrace{\frac{2^k}{N}}_{Variance} + \> \> \underbrace{O(2^{-4k} + 1/N)}_{Negligible}, \;\;\; \forall k \leq K.
\end{align*}
\end{theorem}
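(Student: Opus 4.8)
The plan is to exploit the $K$-bit universality assumption to show that the trained decoder, read out at any resolution $k\le K$, is exactly an ordinary fixed-width histogram density estimator, so that the statement reduces to the classical mean-integrated-squared-error expansion for histograms. First I would observe that the maximum-likelihood objective $\tfrac{1}{N}\sum_n -\log p_\theta(\lambda_K(Y_n))$ is precisely the cross-entropy $H(\widehat p_N, p_\theta)$, where $\widehat p_N$ is the empirical distribution of the truncated draws $\lambda_K(Y_1),\dots,\lambda_K(Y_N)$ over the $2^K$ length-$K$ bitstrings. Since $H(p,q)=H(p,p)+\KL(p\,\|\,q)$ with $\KL\ge 0$ and equality iff $q=p$, Definition~\ref{def:universality} forces every minimizer $\theta^{*}$ to satisfy $p_{\theta^{*}}=\widehat p_N$ as distributions on $2^K$ categories. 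Marginalizing out the last $K-k$ coordinates — and using that truncating to $K$ then to $k$ bits is the same as truncating to $k$ bits — the marginal $p^{k}_{\theta^{*}}$ is the empirical distribution of $\lambda_k(Y_1),\dots,\lambda_k(Y_N)$, which depends only on $k$ and not on $K$; this is why the conclusion is stated for every $k\le K$. Writing $h:=2^{-k}$, $B_j:=[jh,(j{+}1)h)$ for $j=0,\dots,2^k-1$, and $p_j:=\int_{B_j}f$, we obtain $f_{N}^{k*}(y)=2^{k}N_{B(y)}/N$ with $N_{B_j}:=\#\{n:Y_n\in B_j\}$ and $B(y)$ the cell containing $y$ — exactly the regular $2^k$-cell histogram estimator, whose counts $(N_{B_j})_j$ are jointly multinomial with marginals $\mathrm{Binomial}(N,p_j)$.

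With this identification I would run the standard bias--variance split. On each cell $B_j$ the estimator equals the constant $N_{B_j}/(Nh)$, so writing $f(y)-f_{N}^{k*}(y)=\big(f(y)-p_j/h\big)+\big(p_j/h-N_{B_j}/(Nh)\big)$ and using that the second bracket has mean zero, the cross term drops under the outer expectation and
\begin{align*}
R\big(f,f_{N}^{k*}\big)=\sum_{j}\int_{B_j}\Big(f(y)-\tfrac{p_j}{h}\Big)^{2}dy\;+\;\sum_{j}\frac{p_j(1-p_j)}{Nh},
\end{align*}
the second term using $\Var(N_{B_j})=Np_j(1-p_j)$. This variance sum equals $\tfrac{1}{Nh}\big(1-\sum_j p_j^2\big)$, and since $\sum_j p_j^2\le\max_j p_j\le h\|f\|_\infty$, it is $\tfrac{2^{k}}{N}+O(1/N)$.

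For the bias sum I would Taylor-expand $f$ about the midpoint $c_j$ of $B_j$: with $f\in C^2$, $p_j/h=f(c_j)+O(h^2)$ because the first-order term integrates to zero about the midpoint, hence $f(y)-p_j/h=f'(c_j)(y-c_j)+O(h^2)$, and since $\int_{B_j}(y-c_j)\,dy=0$ exactly, $\int_{B_j}(f-p_j/h)^2\,dy=f'(c_j)^2\tfrac{h^3}{12}+O(h^4)$. Summing over the $2^k$ cells and recognizing $\sum_j f'(c_j)^2 h$ as the midpoint Riemann sum for $\int_0^1 f'(y)^2\,dy$ gives the bias term $\tfrac{2^{-2k}}{12}\int_0^1 f'(y)^2\,dy+O(2^{-4k})$. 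Adding bias and variance produces the claimed identity, the collected lower-order pieces forming the $O(2^{-4k}+1/N)$ remainder; all implicit constants depend only on $\|f\|_\infty,\|f'\|_\infty,\|f''\|_\infty$.

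The only genuinely new ingredient is the reduction in the first step; everything after that is classical histogram analysis, and within it the variance computation and the vanishing of the cross term are immediate. The step demanding the most care is the bias bookkeeping: one has to expand about the cell midpoints precisely so that the first-order cross terms and the midpoint-rule discretization error each pick up an extra factor of $h$, so that the errors accumulated over all $2^k$ cells still collapse to the claimed $O(2^{-4k})$ order rather than a coarser bound.
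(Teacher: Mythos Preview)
Your proposal is correct and follows essentially the same route as the paper: reduce via $K$-bit universality to the classical fixed-width histogram estimator, then carry out the standard bias--variance MISE expansion using a midpoint Taylor expansion for the bias and the binomial variance for the counts. The only cosmetic differences are that you invoke the $H(p,q)=H(p,p)+\KL(p\,\|\,q)$ identity explicitly and handle the variance via the cleaner identity $\sum_j p_j(1-p_j)=1-\sum_j p_j^2$; one small bookkeeping point is that your stated per-cell remainder $O(h^4)$ would sum to $O(h^3)$, so to actually land at the claimed $O(2^{-4k})$ you need the $O(h^5)$ per-cell order that your closing paragraph alludes to (via the vanishing of $\int_{B_j}(y-c_j)^3\,dy$), which the paper's proof also uses implicitly.
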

Note that this theorem is broad, applicable to both Riemann and decoding heads even if they perform inference at a lower token length $k$ than the maximum length $K$ used during training. For simplicity, let us assume that the maximal length is always used (i.e. $k=K$). Intuitively, this implies that one needs a higher resolution $K$ to capture the curvature of $f$, but as the number of bins increases, more data points $N$ are required to learn to separate these $2^K$ bins. In Figure \ref{fig:empirical_risk}, for large $N$=16384, we show this trend holds empirically where there is an optimal $K$$\approx$5 which minimizes the error.

\begin{figure}[t]
    \centering  
    \includegraphics[width=0.85\textwidth]{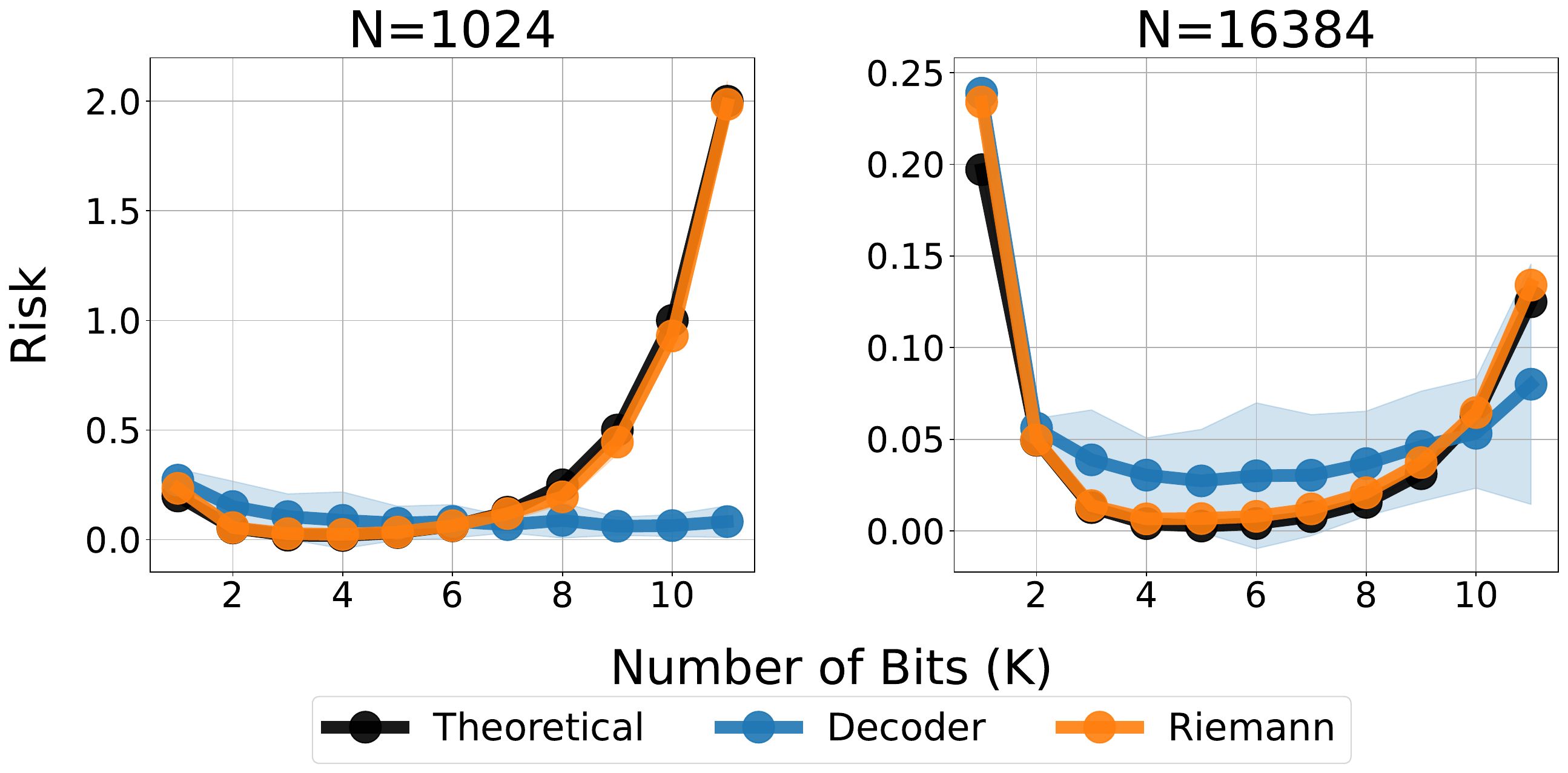}
    \caption{\small Lower ($\downarrow$) is better. Risk (theoretical and empirical) when varying $K$ and $N$ to fit a truncated $\mathcal{N}_{[0,1]}(0.5, 0.25^2)$ distribution using binary tokenization. Results averaged across 10 runs each.}
    \label{fig:empirical_risk}
\end{figure}

When $N$ is quite small (e.g. 1024) we see that the decoder head significantly deviates from the theoretical risk (for the better) when the number of bits is large ($>$9), while the Riemann head still fits it tightly. Recall that we required a ``universality'' assumption, which says that our model can learn any discrete distribution over $K$-bit strings perfectly. We can decompose this assumption further into two pieces: 1) that there exists $\theta^*$ in our model class that achieves the minimum cross-entropy (i.e. $p_{\theta^*} = p$ in Definition~\ref{def:universality}), and 2) that our SGD-based training procedure is able to find it. An explanation for this phenomenon is that in this regime (low sample size and large number of bits, or equivalently, a large number of bins), the risk profile of the classical Riemann estimator is dominated by the variance term. Few samples land in each bin and as a result the histogram-based density estimate for the bins is noisy.

It is conceivable that a combination of the inductive bias of our model class and the implicit bias of our SGD training procedure makes the decoder less likely to fit noise; a concrete example would be that the model is biased to learn smooth distributions, and so when asked to fit the highly discontinuous empirical distribution arising from dropping few samples into a large number of bins, it refuses to, instead opting to learn a smooth approximation, and thereby driving down the variance term and hence the overall risk. This suggests the decoder head possesses implicit regularization properties which make it much more efficient with low training data.

\begin{figure}[h]
    \centering  
    \includegraphics[width=0.9\textwidth]{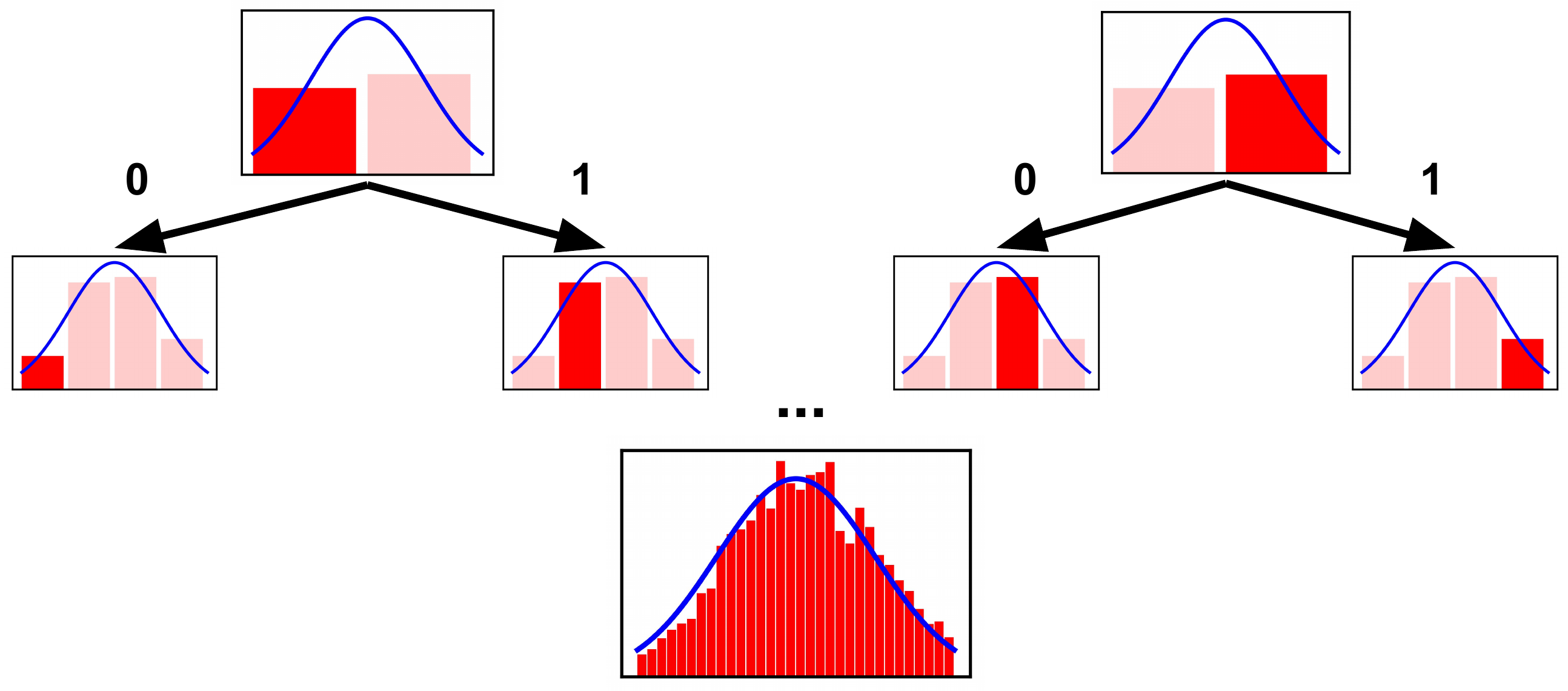}
    \caption{\small Visualization of fitting a truncated Gaussian distribution. Each level $k$ of the binary tree represents the empirical fit using $k$ bits, and each bin gets subdivided into two.}
    \label{fig:binary}
\end{figure}

Taking a closer look at the decoding mechanism, a crucial observation is that $\lambda_k$ essentially discretizes the unit interval (and hence $f$ as well) into bins $\{B_j\}_{j=0}^{2^k-1}$, where $B_j = [j 2^{-k}, (j+1) 2^{-k})$ so that $\P(x\in B_j) = \int_{B_j} f(y)dy$. We can identify $k$-bit sequence $y=0.b_1 \dots b_k $ with the interval $[y, y+2^{-k}]$. With a single bit ($K=1$) we learn a histogram estimator on two bins $[0, 1/2)$ and $[1/2, 1)$ representing $0$ and $1$. With two bits we refine our prediction using four bins: $[0, 1/4)$, $[1/4, 1/2)$, $[1/2, 3/4)$, and $[3/4, 1)$ representing $(0, 0), (0, 1), (1, 0), (1, 1)$ respectively (because, for example $(0, 1)$ means $0.01_2 = 1/4$).

We can interpret binary representations in terms of binary trees on $2^K$ leaf nodes where nodes represent intervals (the root representing $[0, 1)$) and left and right children represent the left and right halves of the node's interval. Reading off bits tells us how to traverse this tree, where 0 and 1 mean traverse the left and right subtrees respectively. For example, to arrive at $(0, 1, 1) = 0.011_2 = 3/8$ our traversal is: $[0, 1) \rightarrow [0, 1/2) \rightarrow [1/4, 1/2) \rightarrow [3/8, 1/2)$.
%\todo{be careful - in the theory we use k and K (lower and upper case)...K is the number of bits used in training. k is any number of bits <= K}

When trained on $K$-bit sequences, our decoding head $p_\theta$ \emph{simultaneously} learns $K$ histogram estimators for $f$; $2^k p_\theta^k (\lambda_k(y))$ is the $k$-th histogram estimator (over $2^k$ bins). In other words, as we decode bits one-by-one auto-regressively, we are iteratively refining our prediction. Figure~\ref{fig:binary} shows this mechanism in detail in the case of fitting a truncated Gaussian distribution.

There are alternatives to binary representations, for example \emph{$p$-adic expansions}, or even the \emph{Stern–Brocot tree} which uses the mediant to determine the child-parent relationship. An interesting research question left for future work is whether these more exotic representations of real numbers are better suited for our sequence-based regression model than the standard representations. 

\section{Experiments}
Our main goals for experiments are to:

\begin{itemize}
    \item Demonstrate decoder heads can be effective swap-in replacements to common pointwise regression heads.
    \item Establish the density estimation capabilities of the decoding-based head over any distribution over $\mathbb{R}$.
    \item Ablate the effect of decoder head size and sequence-specific methods such as error correction on performance.
\end{itemize}

To maintain fairness, all neural network methods have access to the same encoder $\phi(x)$, which is a large multi-layer perceptron (MLP) with ReLU activations, with hyperparameter sweeps over number of layers (up to 5) and hidden unit sizes (up to 2048). Furthermore, the decoder head uses only 1 layer and 32 units, making up for less than 10\% of the total network parameter count, which minimizes its contribution to representation learning as a confounding factor.

Furthermore, for distributional heads (e.g. decoder, Riemann), we sweep their specific settings (e.g. number of bins / tokenization) over reasonable values - additional details are found in Appendix \ref{appendix:exact_experiment_details}. For the vast majority of tabular regression problems, we found that the process of training and tuning only requires at most 20 minutes on a single Nvidia P100 GPU, making the decoder head relatively cheap to use. For comparisons, we use relative mean squared error within individual tasks and scale-invariant Kendall-Tau correlation for aggregate comparisons.

\subsection{Curve Fitting}

We begin by visually demonstrating the fundamental representation power of tokenization. In Figure \ref{fig:curve_fitting}, the unnormalized decoder head is able to successfully capture the shapes of various functions with which both the Riemann and pointwise head struggle. 

\begin{figure}[h]
    \centering  
\includegraphics[width=1.00\textwidth]{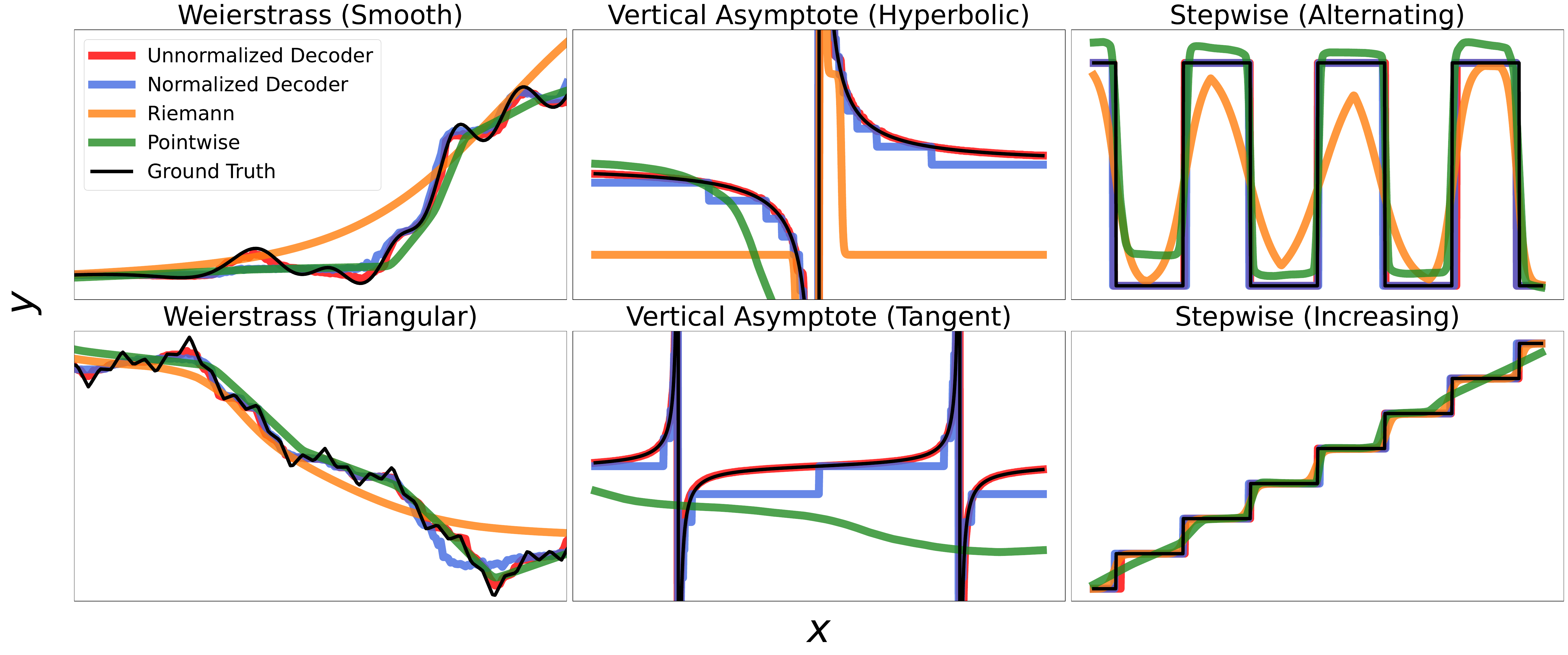}
    \caption{\small Visual fit to ground truth is better. Curve fitting plots for various 1D functions. Both models are trained over 100K $(x,y)$ points, where $x$ is uniformly sampled from a bounded range. Note: These results occur regardless of xy-scales, which are omitted for brevity. Riemann prediction for ``Vertical Asymptote (Tangent)'' went out of range.}
    \label{fig:curve_fitting}
\end{figure}

The issue with using the pointwise head stems from two main factors: (1) requiring $y$-normalization, which leads to numeric instabilities especially with functions with very high or unbounded $y$-ranges, and (2) struggling to model abrupt or high rates of change (i.e. large Lipschitz constants). In contrast, the unnormalized decoder head does not encounter these issues due to its ability to express a very high range of $y$-values, with the normalized decoder also performing decently.

%\begin{wraptable}[9]{r}{0.5\textwidth}
%\vspace{-0.3cm}
\begin{table}
\centering
\scalebox{1.00}{
\begin{tabular}{|c|c|c|c|c|}
\hline
 & \multicolumn{4}{c|}{\textbf{Input Dimension}} \\
\cline{2-5}
\textbf{Regression Head} & \textbf{5} & \textbf{10} & \textbf{15} & \textbf{20} \\
\hline
 Unnormalized Decoder & 89.56 & 88.71 & 87.49 & 86.11 \\
\hline
 Normalized Decoder  & 89.40 & 88.54 & 86.90 & 86.02 \\
\hline
 Pointwise & 89.08 & 88.25 & 88.06 & 86.78 \\
\hline
 Riemann & 88.94 & 88.30 & 87.42 & 86.78 \\
\hline
\end{tabular}
}
\caption{\small Higher ($\uparrow$) is better. Mean Kendall-Tau correlations over BBOB functions with ($\approx$100K) training data. Individual function results can be seen in Appendix \ref{appendix:bbob_curve_fitting}.}
\label{tab:bbob_comparison}
\end{table}
%\end{wraptable}

In Table \ref{tab:bbob_comparison}, as a sanity check over higher-dimensional functions, synthetic continuous objectives from the Black-box Optimization Benchmarking (BBOB) suite \citep{bbob} can also be sufficiently fitted by both the unnormalized and normalized decoder heads just as well as the pointwise and Riemann heads.

\subsection{Real-World Regression}
In Figure \ref{fig:amlb_and_openml}, over real-world OpenML \citep{openml} regression tasks from OpenML-CTR23 \citep{openml_ctr23} and AMLB \citep{amlb}, using the unnormalized decoder head is competitive to using a regular pointwise head given the same training data. In fact, in the majority of tasks, the decoder outperforms the pointwise head, and in a few cases, the gap can be quite significant ($>$0.3). Full results in Appendix \ref{appendix:individual_openml}, Figure \ref{fig:individual_openml} also lead to the same conclusion for normalized decoder heads.

\begin{figure}[h]
    \centering
    \includegraphics[width=1.00\textwidth]{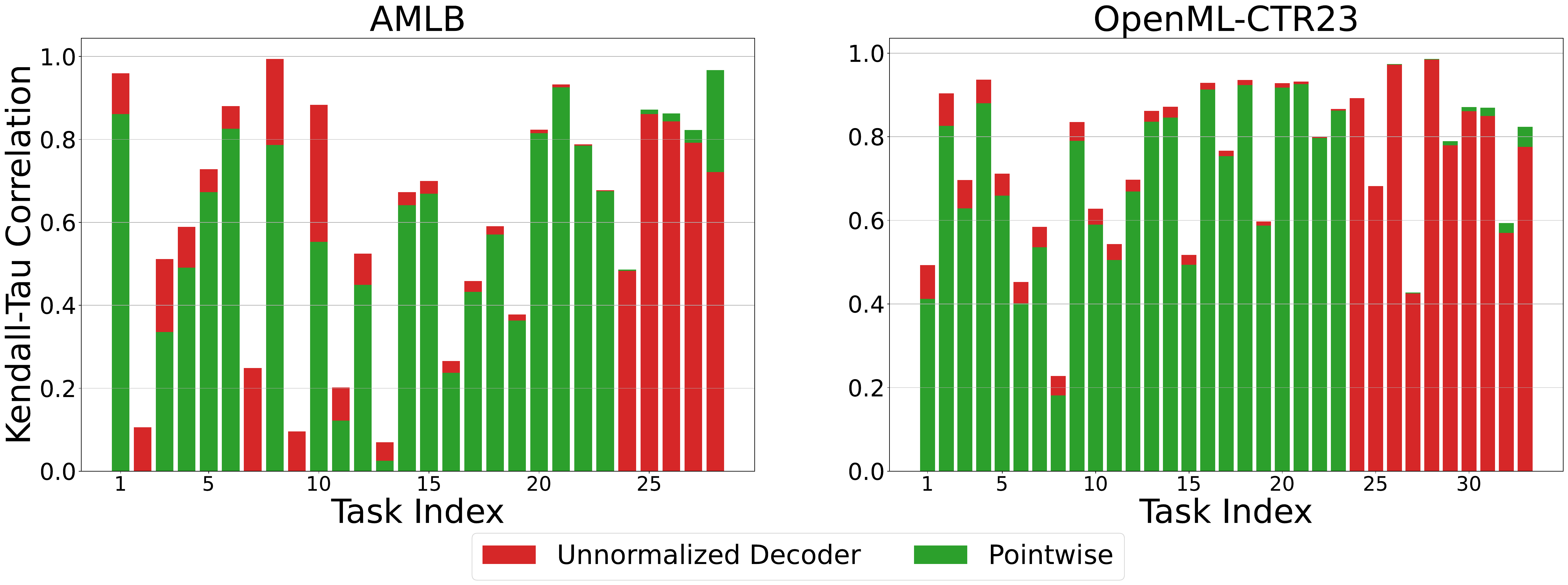} 
    \caption{\small Higher ($\uparrow$) is better. Kendall-Tau regression scores over AMLB and OpenML-CTR23 tasks using up to 10K maximum training points. Each bar averaged over 10 runs and bars from the same task (but different regressors) are stacked on top of each other and sorted by gap performance gap.}
 \label{fig:amlb_and_openml} 
\end{figure}

In Figure \ref{fig:openml_decoder_vs_riemann_diagonal}, both decoding heads outperform the Riemann head in the vast majority of tasks as well, suggesting improved sample efficiency from minimizing vocabulary / bin sizes. In order to more rigorously validate the sample efficiency hypothesis, in Figure \ref{fig:decoder_data_efficiency}, we compare the use of all normalized heads (normalized decoder, Riemann histogram, and pointwise), when varying the amount of training data. We omit unnormalized decoder results, as it aggregates samples differently. We first observe the data inefficiency of using the histogram head on selected regression tasks - in certain cases, the histogram head plateaus, unable to even achieve the performance of the decoder head, regardless of the amount of training data. 

\begin{figure}[h]
    \centering
    \includegraphics[width=0.8\textwidth]{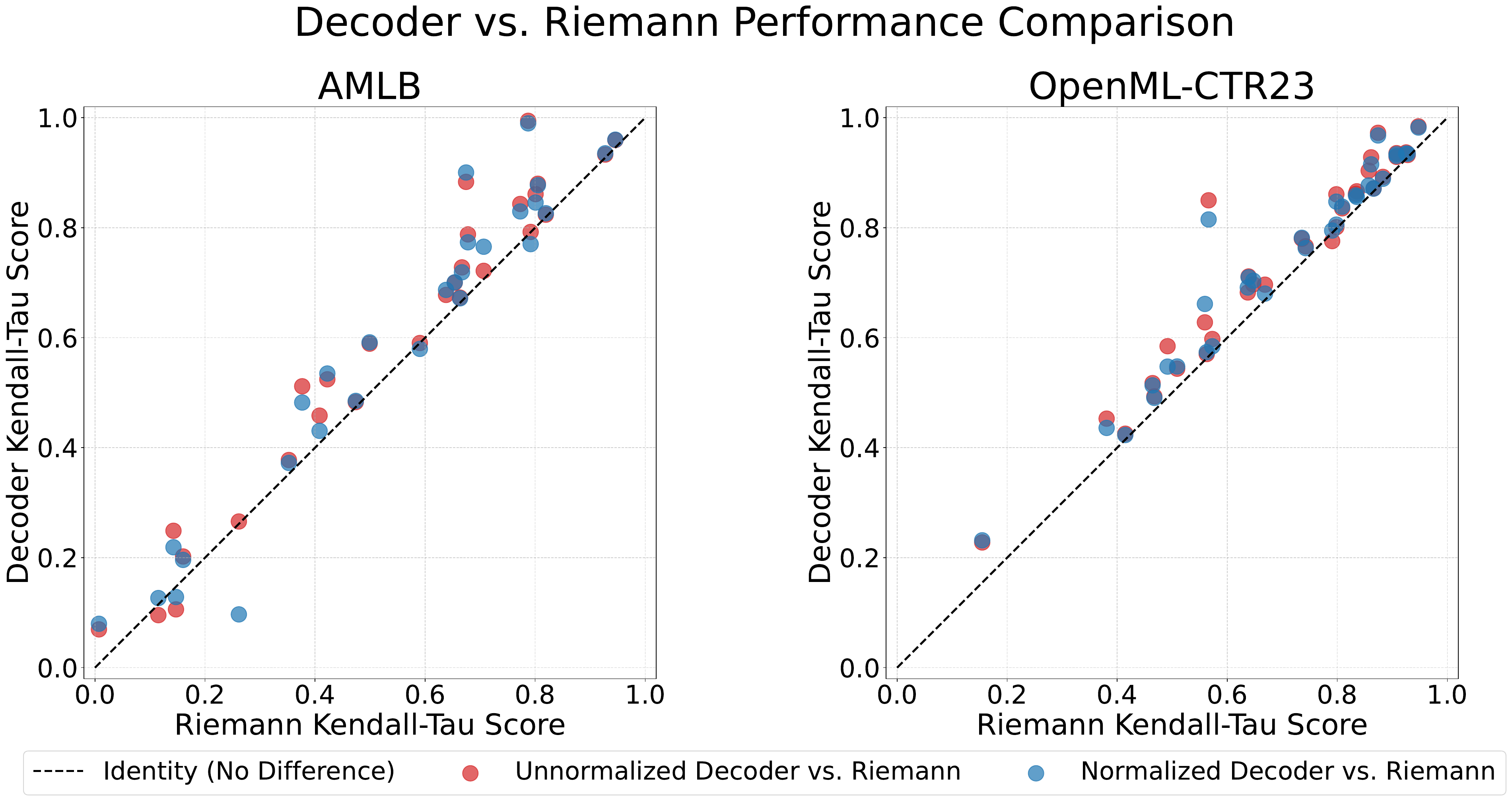} 
    \caption{\small Upper left $(\nwarrow)$ is better for decoder heads. Paired scatter plots for comparing both decoders against the Riemann head, over real-world regression tasks.}
 \label{fig:openml_decoder_vs_riemann_diagonal} 
\end{figure}

\begin{figure}[H]
    \centering
    \includegraphics[width=0.8\textwidth]{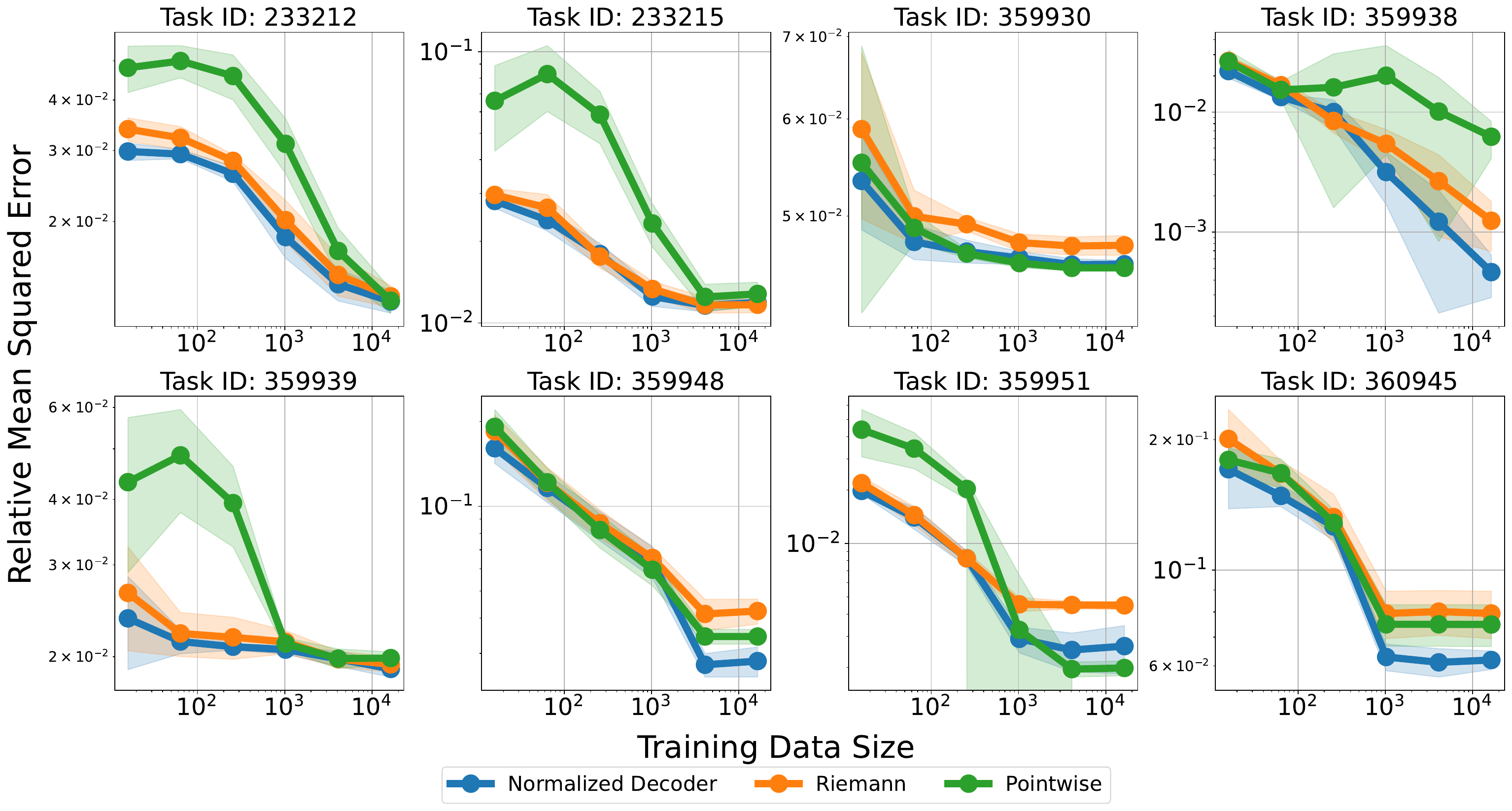}
    \caption{\small Lower ($\downarrow$) is better. Relative mean squared error (MSE) over selected AMLB tasks. Each method used a min-max linear scaling normalization on $y$-values. Full results in Appendix \ref{appendix:data_scaling}, Figure \ref{fig:full_data_scaling_amlb}.}
    \label{fig:decoder_data_efficiency}
\end{figure}

Furthermore, interesting observations can be made when comparing against the standard pointwise head as a baseline. In high data regimes ($\approx$$10^4$ data points), there are cases in which it \textit{also} plateaus earlier than the decoding head. In low data regimes ($\approx$$10^1$ data points), one would expect the decoding head to struggle more as it needs to learn numeric token representations, but as it turns out, the pointwise head can perform worse due to numeric instabilities of its own. Due to undertraining, the pointwise head required appending a sigmoid activation to enforce the normalized output to be within [0,1] to avoid extremely high MSE errors.

\subsection{Density Estimation}
In Figure \ref{fig:density_visualization}, we further see the decoding head's ability to perform density estimation over various shapes. Given unbounded training data it is able to capture the overall distribution $p(y|x)$ well, although there can be slight outlier noise as shown by lighter points. In Appendix \ref{appendix:density_estimation_visualization} we show that even baseline heads such as Mixture Density Networks (MDNs) \citep{mdn} and Riemann distributions also suffer from noisy outputs. While one can enforce the sampling to be tighter (e.g. lowering temperature) to remove noise, this tighter sampling can unfortunately also reduce expressivity. In general, we find that vanilla temperature sampling with temperature $\approx$1.0 is the best way to match $p(y|x)$.

\begin{figure}[h]
    \centering
    \includegraphics[width=0.99\textwidth]{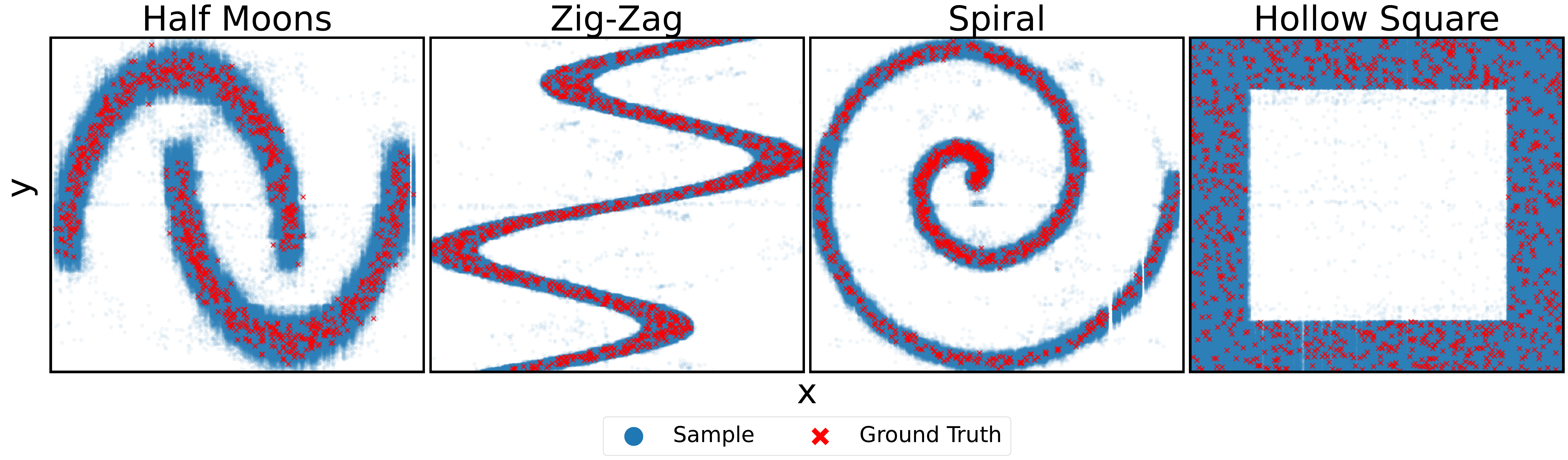}
    \caption{\small Fit to ground truth is better. Density estimation visualization over various shapes using an unnormalized decoder head with vanilla temperature sampling. Note that these results occur regardless of xy-scales, which are omitted for brevity.}
    \label{fig:density_visualization}
\end{figure}

In Table \ref{tab:test_nll}, we display the negative log-likelihood (NLL) on a collection of representative real-world datasets from the UCI regression repository \citep{uci} (full results over 25 datasets in Appendix \ref{appendix:uci}). We see that MDN head performance has high variability, at times able to perform the best but also extremely poorly depending on the task. Meanwhile both decoding heads remain reliable overall (NLL$<$0.7 always). In comparison, the Riemann head consistently underperforms in every task. 

\begin{table}[h]
\centering
\setlength{\tabcolsep}{25pt} % Adjust this value as needed
\scalebox{0.85}{
\begin{tabular}{l *{5}{c}} % 8 columns, first left, rest centered
\toprule
Dataset & MDN & UD & ND & R  \\
\midrule
Airfoil   & \bf{0.12 $\pm$ 0.11} & 0.40 $\pm$ 0.01 & 0.34 $\pm$ 0.01 & 1.33 $\pm$ 0.14 \\
Bike      & \red{4.59 $\pm$ 0.86} & 0.12 $\pm$ 0.00 & \bf{0.10 $\pm$ 0.01} & 0.36 $\pm$ 0.05 \\
Elevators & 0.30 $\pm$ 0.43 & 0.15 $\pm$ 0.00 & \bf{0.13 $\pm$ 0.00} & 1.12 $\pm$ 0.02 \\
Gas       & 0.68 $\pm$ 0.25 & \bf{0.02 $\pm$ 0.01} & \bf{0.02 $\pm$ 0.00} & 0.20 $\pm$ 0.09 \\
Housing   & \bf{0.22 $\pm$ 0.13} & 0.41 $\pm$ 0.03 & 0.38 $\pm$ 0.03 & 1.56 $\pm$ 0.21 \\
Kin 40K    & \red{7.49 $\pm$ 0.73} & 0.19 $\pm$ 0.01 & \bf{0.12 $\pm$ 0.01} & 0.39 $\pm$ 0.03 \\
Pol         & 1.49 $\pm$ 0.41 & \bf{0.01 $\pm$ 0.00} & \bf{0.01 $\pm$ 0.00} & 0.18 $\pm$ 0.02 \\
Protein     & 1.07 $\pm$ 0.44 & \bf{0.34 $\pm$ 0.00} & 0.41 $\pm$ 0.01 & 1.55 $\pm$ 0.04 \\
Pumadyn32nm & 0.69 $\pm$ 1.26 & \bf{0.55 $\pm$ 0.00} & 0.58 $\pm$ 0.02 & \red{2.32 $\pm$ 0.03} \\
Wine      & \bf{0.05 $\pm$ 0.12} & 0.24 $\pm$ 0.01 & 0.21 $\pm$ 0.01 & 1.67 $\pm$ 0.14 \\
Yacht     & \bf{0.21 $\pm$ 0.10} & 0.39 $\pm$ 0.02 & 0.23 $\pm$ 0.05 & 1.29 $\pm$ 0.38 \\
\bottomrule
\end{tabular}
}
\caption{\small Lower ($\downarrow$) is better. Avg. NLL ($\pm$ StdDev) of test examples on UCI datasets over 10 train-test splits. Abbreviations: (UD, ND) = (unnormalized, normalized) decoder heads respectively; R = Riemann.}
\label{tab:test_nll}
\end{table}

\subsection{Ablation: Role of Decoding Head Size}
We ablate the effect of the decoding head's size on performance. We first fix the tokenization for the normalized decoding head ($B$=10, $K$=4) and then sweep the number of layers, heads, and hidden units. In Figure \ref{fig:decoder_size}, we observe that larger decoding heads do sometimes help, but only up to a certain point, at which overfitting can occur. This was also observed over regression over BBOB functions and with the unnormalized decoding head, but we omitted these results for brevity.

\begin{figure}[h]
    \centering
    \includegraphics[width=0.8\textwidth]{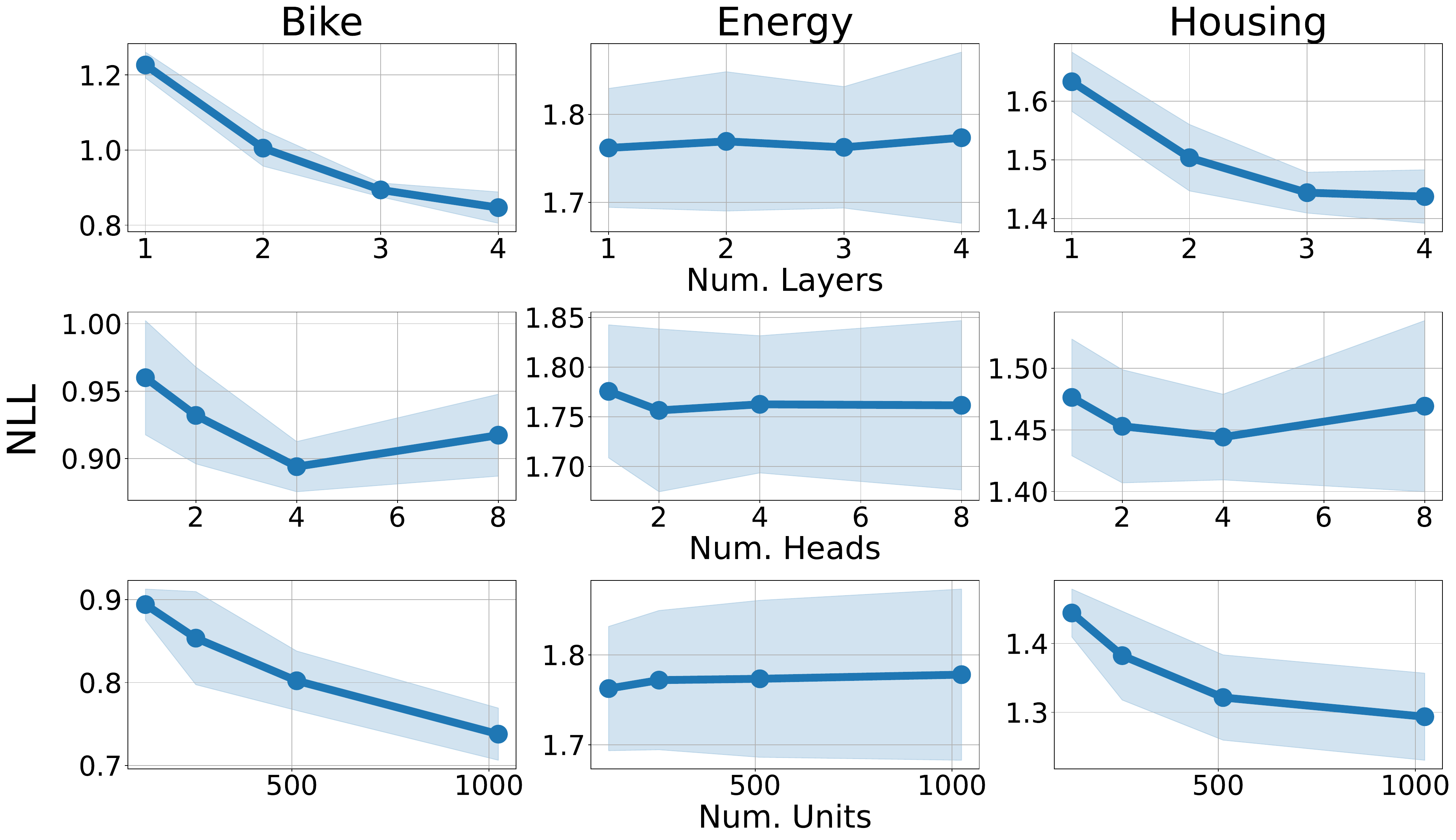}
    \caption{\small Lower ($\downarrow$) is better. NLL over UCI datasets, when varying different axis (layers, heads, units) from a fixed default of (3, 4, 128) respectively.}
    \label{fig:decoder_size}
\end{figure}

\subsection{Ablation: Error Correction}
One can also improve regression behavior using techniques purely by modifying sequence representations. Inspired by the field of coding theory, we can use \textit{error correction}, where we may simply have the decoding head repeat its output multiple times $(t_1, \ldots, t_K, t_1', \ldots, t_K', t_1'', \ldots, t_K'', \ldots)$ during training, and at inference perform majority voting on each location $k \in \{1, \ldots, K\}$.

\begin{figure}[h]
    \centering
    \includegraphics[width=0.99\textwidth]{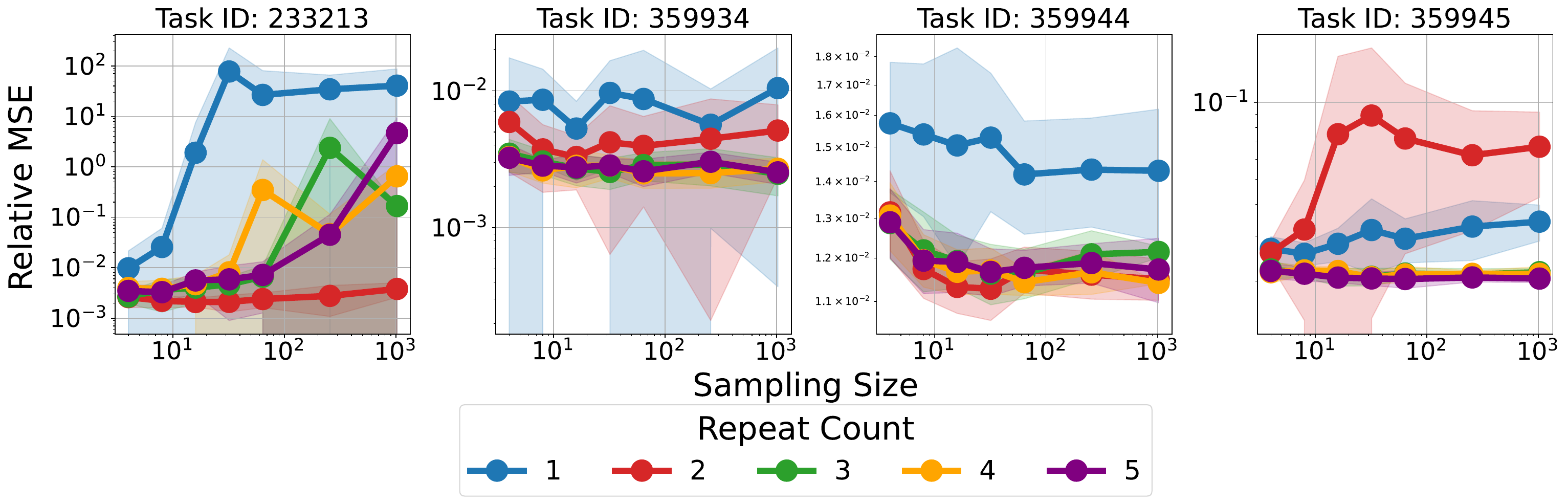}
    \caption{\small Lower ($\downarrow$) is better. Relative MSE over selected AMLB tasks, when varying output repetitions.}
    \label{fig:error_correction}
\end{figure}

In Figure \ref{fig:error_correction}, we focus on the unnormalized case when using mean aggregation, where performance can be significantly harmed from extreme outliers. We see that when using regular tokenization (repeat count=1), as more samples are drawn, the likelihood of drawing outliers increases the error. However, the error can be substantially decreased by training the decoding head to decode the same tokens repeatedly and allow better scaling with samples, although repeating too many times may make learning more difficult. Not all error correction techniques improve results however - in Appendix \ref{appendix:different_tokenization}, we briefly observe negative results applying other types of error correction, and we leave exploring the space of such methods for future work.

\section{Discussion: Limitations and Extensions}
\label{sec:discussion}
This work establishes the validity of training decoding heads with cross-entropy losses for regression. To minimize confounding factors, our designs remained very simple (e.g. using basic primitives such as softmax and vanilla attention mechanisms) yet principled (e.g. digit-by-digit tokenization and constrained sampling). We list some limitations of this work, along with more potential areas for exploration.

\textbf{Modern LLM Architectures:} Many modern LLM architectures no longer use vanilla attention mechanisms, instead opting for sparsity or low-rank approximations. In addition, MLPs have typically been replaced by mixtures of experts for memory reduction. It is worth studying further how these changes affect the performance of numeric decoding.

\textbf{Tokenization and Sampling:} Our digit-by-digit tokenization and constrained decoding can be considered ideal cases, which modern LLMs only approximately implement. The main sources of differences include vocabularies which do not use individual digit tokens, and sampling procedures which are Top-P or Top-K, which may accidentally choose invalid tokens. We hypothesize that these issues can decrease regression performance.

\textbf{Intermediate Language and Training:} In many LLM use-cases involving numeric prediction problems, the decoder not only must return a tokenized number, but also other intermediate language tokens. Our work does not address these cases, and it is unclear how fine-tuning on these intermediate language tokens may affect training dynamics and numeric prediction performance. Furthermore, modern LLM training consists of reinforcement learning (RL), and it remains to be studied how reward-based training methods affect regression results.

\textbf{Multi-objective Regression:} In this paper, we only studied the single-objective regression case. However, one may easily modify our paradigm to support multiple objectives $p(y^{(1)}, \ldots, y^{(M)} \> | \> \phi(x))$, e.g. by decoding a concatenated sequence of those objectives. This has the particular benefit of modeling objectives autoregressively, which can be difficult to perform with classical techniques. The benefit of multi-objective density estimation is even more pronounced, due to the decoder's universal approximation abilities.

\textbf{Other Regression Architectures:} We did not compare to significantly more complex methods such as stochastic networks, which use stochastic activations or weights. Early examples include Sigmoid Belief Nets \citep{belief_nets, stochastic_feedforward_nns}, which have not seen wide adoption due to their complex architectures and expectation-maximization training updates. Bayesian neural networks \citep{bnn1, bnn2, bnn3} can be seen as more modern stochastic networks, but still possess complex inference techniques, e.g. Markov Chain Monte Carlo (MCMC) or variational inference. Similarly, Energy-based models \citep{energy_based} can be used for image-based regression \citep{energy_based_regression, energy_based_regression2} but still see limited use due MCMC required at inference time. 

\section{Conclusion and Future Work}
We thoroughly investigated the many benefits but also drawbacks of using decoding-based regression. We described a natural tokenization scheme for both normalized and unnormalized $y$-values, and theoretically established its risk minimization properties. Empirically, we showed that it can be competitive as, or even outperform traditional pointwise heads for regression tasks. Furthermore, it is also capable of density estimation over a variety of conditional distributions $p(y|\phi(x))$, and can further outperform common baseline regression heads such as Gaussian mixtures and histogram distributions. We hope this work will also be a valuable reference for the language modeling community and that it provides a principled explanation for the use of supervised fine-tuning over numeric targets.

\section*{Acknowledgements}
We would like to thank Yutian Chen for his valuable review of the work and Bangding (Jeffrey) Yang for technical help. We further thank Yash Akhauri, Aviral Kumar, Bryan Lewandowski, Michal Lukasik, Sagi Perel, David Smalling, and Subhashini Venugopalan for useful discussions, and Daniel Golovin and Denny Zhou for continuing support.

\bibliography{references}

\begin{thebibliography}{50}
\providecommand{\natexlab}[1]{#1}
\providecommand{\url}[1]{\texttt{#1}}
\expandafter\ifx\csname urlstyle\endcsname\relax
  \providecommand{\doi}[1]{doi: #1}\else
  \providecommand{\doi}{doi: \begingroup \urlstyle{rm}\Url}\fi

\bibitem[Akhauri et~al.(2025)Akhauri, Lewandowski, Lin, Reyes, Forbes, Wongpanich, Yang, Abdelfattah, Perel, and Song]{performance_prediction}
Yash Akhauri, Bryan Lewandowski, Cheng-Hsi Lin, Adrian~N. Reyes, Grant~C. Forbes, Arissa Wongpanich, Bangding Yang, Mohamed~S. Abdelfattah, Sagi Perel, and Xingyou Song.
\newblock Performance prediction for large systems via text-to-text regression, 2025.

\bibitem[Bai et~al.(2022)Bai, Kadavath, Kundu, Askell, Kernion, Jones, Chen, Goldie, Mirhoseini, McKinnon, Chen, Olsson, Olah, Hernandez, Drain, Ganguli, Li, Tran{-}Johnson, Perez, Kerr, Mueller, Ladish, Landau, Ndousse, Lukosiute, Lovitt, Sellitto, Elhage, Schiefer, Mercado, DasSarma, Lasenby, Larson, Ringer, Johnston, Kravec, Showk, Fort, Lanham, Telleen{-}Lawton, Conerly, Henighan, Hume, Bowman, Hatfield{-}Dodds, Mann, Amodei, Joseph, McCandlish, Brown, and Kaplan]{rlaif}
Yuntao Bai, Saurav Kadavath, Sandipan Kundu, Amanda Askell, Jackson Kernion, Andy Jones, Anna Chen, Anna Goldie, Azalia Mirhoseini, Cameron McKinnon, Carol Chen, Catherine Olsson, Christopher Olah, Danny Hernandez, Dawn Drain, Deep Ganguli, Dustin Li, Eli Tran{-}Johnson, Ethan Perez, Jamie Kerr, Jared Mueller, Jeffrey Ladish, Joshua Landau, Kamal Ndousse, Kamile Lukosiute, Liane Lovitt, Michael Sellitto, Nelson Elhage, Nicholas Schiefer, Noem{\'{\i}} Mercado, Nova DasSarma, Robert Lasenby, Robin Larson, Sam Ringer, Scott Johnston, Shauna Kravec, Sheer~El Showk, Stanislav Fort, Tamera Lanham, Timothy Telleen{-}Lawton, Tom Conerly, Tom Henighan, Tristan Hume, Samuel~R. Bowman, Zac Hatfield{-}Dodds, Ben Mann, Dario Amodei, Nicholas Joseph, Sam McCandlish, Tom Brown, and Jared Kaplan.
\newblock Constitutional {AI:} harmlessness from {AI} feedback.
\newblock \emph{CoRR}, abs/2212.08073, 2022.
\newblock \doi{10.48550/ARXIV.2212.08073}.

\bibitem[Bellemare et~al.(2017)Bellemare, Dabney, and Munos]{rl_distributional_value}
Marc~G. Bellemare, Will Dabney, and R{\'{e}}mi Munos.
\newblock A distributional perspective on reinforcement learning.
\newblock In Doina Precup and Yee~Whye Teh (eds.), \emph{Proceedings of the 34th International Conference on Machine Learning, {ICML} 2017, Sydney, NSW, Australia, 6-11 August 2017}, volume~70 of \emph{Proceedings of Machine Learning Research}, pp.\  449--458. {PMLR}, 2017.

\bibitem[Bishop(1994)]{mdn}
{Christopher M.} Bishop.
\newblock Mixture density networks.
\newblock Technical report, Aston University, 1994.

\bibitem[Bradley \& Terry(1952)Bradley and Terry]{bradley_terry}
Ralph~Allan Bradley and Milton~E. Terry.
\newblock Rank analysis of incomplete block designs: I. the method of paired comparisons.
\newblock \emph{Biometrika}, 39\penalty0 (3/4):\penalty0 324--345, 1952.
\newblock ISSN 00063444, 14643510.

\bibitem[Cao et~al.(2020)Cao, Mirjalili, and Raschka]{rank_consistent_ordinal_regression}
Wenzhi Cao, Vahid Mirjalili, and Sebastian Raschka.
\newblock Rank consistent ordinal regression for neural networks with application to age estimation.
\newblock \emph{Pattern Recognit. Lett.}, 140:\penalty0 325--331, 2020.
\newblock \doi{10.1016/J.PATREC.2020.11.008}.

\bibitem[Charton(2022)]{linear_algebra_transformer}
Fran{\c{c}}ois Charton.
\newblock Linear algebra with transformers.
\newblock \emph{Trans. Mach. Learn. Res.}, 2022, 2022.

\bibitem[Chen et~al.(2022)Chen, Song, Lee, Wang, Zhang, Dohan, Kawakami, Kochanski, Doucet, Ranzato, Perel, and de~Freitas]{optformer}
Yutian Chen, Xingyou Song, Chansoo Lee, Zi~Wang, Richard Zhang, David Dohan, Kazuya Kawakami, Greg Kochanski, Arnaud Doucet, Marc'Aurelio Ranzato, Sagi Perel, and Nando de~Freitas.
\newblock Towards learning universal hyperparameter optimizers with transformers.
\newblock In Sanmi Koyejo, S.~Mohamed, A.~Agarwal, Danielle Belgrave, K.~Cho, and A.~Oh (eds.), \emph{Advances in Neural Information Processing Systems 35: Annual Conference on Neural Information Processing Systems 2022, NeurIPS 2022, New Orleans, LA, USA, November 28 - December 9, 2022}, 2022.

\bibitem[Chiang et~al.(2025)Chiang, Lee, and Lukasik]{tract}
Cheng{-}Han Chiang, Hung{-}yi Lee, and Michal Lukasik.
\newblock {TRACT:} regression-aware fine-tuning meets chain-of-thought reasoning for llm-as-a-judge.
\newblock \emph{CoRR}, abs/2503.04381, 2025.
\newblock \doi{10.48550/ARXIV.2503.04381}.

\bibitem[d'Ascoli et~al.(2022)d'Ascoli, Kamienny, Lample, and Charton]{symbolic_regression_recurrent}
St{\'{e}}phane d'Ascoli, Pierre{-}Alexandre Kamienny, Guillaume Lample, and Fran{\c{c}}ois Charton.
\newblock Deep symbolic regression for recurrent sequences.
\newblock \emph{CoRR}, abs/2201.04600, 2022.

\bibitem[Diaz \& Marathe(2019)Diaz and Marathe]{soft_labels_ordinal_regression}
Raul Diaz and Amit Marathe.
\newblock Soft labels for ordinal regression.
\newblock In \emph{{IEEE} Conference on Computer Vision and Pattern Recognition, {CVPR} 2019, Long Beach, CA, USA, June 16-20, 2019}, pp.\  4738--4747. Computer Vision Foundation / {IEEE}, 2019.
\newblock \doi{10.1109/CVPR.2019.00487}.

\bibitem[Dua \& Graff(2017)Dua and Graff]{uci}
Dheeru Dua and Casey Graff.
\newblock {UCI} machine learning repository, 2017.
\newblock URL \url{http://archive.ics.uci.edu/ml}.

\bibitem[Elhara et~al.(2019)Elhara, Varelas, Nguyen, Tusar, Brockhoff, Hansen, and Auger]{bbob}
Ouassim Elhara, Konstantinos Varelas, Duc Nguyen, Tea Tusar, Dimo Brockhoff, Nikolaus Hansen, and Anne Auger.
\newblock Coco: the large scale black-box optimization benchmarking (bbob-largescale) test suite.
\newblock \emph{arXiv preprint arXiv:1903.06396}, 2019.

\bibitem[Fan et~al.(2018)Fan, Lewis, and Dauphin]{top_k}
Angela Fan, Mike Lewis, and Yann~N. Dauphin.
\newblock Hierarchical neural story generation.
\newblock In Iryna Gurevych and Yusuke Miyao (eds.), \emph{Proceedings of the 56th Annual Meeting of the Association for Computational Linguistics, {ACL} 2018, Melbourne, Australia, July 15-20, 2018, Volume 1: Long Papers}, pp.\  889--898. Association for Computational Linguistics, 2018.
\newblock \doi{10.18653/V1/P18-1082}.

\bibitem[Fischer et~al.(2023)Fischer, Feurer, and Bischl]{openml_ctr23}
Sebastian~Felix Fischer, Liana Harutyunyan~Matthias Feurer, and Bernd Bischl.
\newblock Open{ML}-{CTR}23 {\textendash} a curated tabular regression benchmarking suite.
\newblock In \emph{AutoML Conference 2023 (Workshop)}, 2023.

\bibitem[Gijsbers et~al.(2024)Gijsbers, Bueno, Coors, LeDell, Poirier, Thomas, Bischl, and Vanschoren]{amlb}
Pieter Gijsbers, Marcos L.~P. Bueno, Stefan Coors, Erin LeDell, S{\'{e}}bastien Poirier, Janek Thomas, Bernd Bischl, and Joaquin Vanschoren.
\newblock {AMLB:} an automl benchmark.
\newblock \emph{J. Mach. Learn. Res.}, 25:\penalty0 101:1--101:65, 2024.

\bibitem[Goan \& Fookes(2020)Goan and Fookes]{bnn3}
Ethan Goan and Clinton Fookes.
\newblock \emph{Bayesian Neural Networks: An Introduction and Survey}.
\newblock Springer International Publishing, Cham, 2020.
\newblock ISBN 978-3-030-42553-1.
\newblock \doi{10.1007/978-3-030-42553-1_3}.

\bibitem[Graves(2012)]{beam_search}
Alex Graves.
\newblock Sequence transduction with recurrent neural networks.
\newblock \emph{CoRR}, abs/1211.3711, 2012.

\bibitem[Gustafsson et~al.(2020)Gustafsson, Danelljan, Bhat, and Sch{\"{o}}n]{energy_based_regression}
Fredrik~K. Gustafsson, Martin Danelljan, Goutam Bhat, and Thomas~B. Sch{\"{o}}n.
\newblock Energy-based models for deep probabilistic regression.
\newblock In Andrea Vedaldi, Horst Bischof, Thomas Brox, and Jan{-}Michael Frahm (eds.), \emph{Computer Vision - {ECCV} 2020 - 16th European Conference, Glasgow, UK, August 23-28, 2020, Proceedings, Part {XX}}, volume 12365 of \emph{Lecture Notes in Computer Science}, pp.\  325--343. Springer, 2020.
\newblock \doi{10.1007/978-3-030-58565-5\_20}.

\bibitem[Harrell \& Davis(1982)Harrell and Davis]{median_estimator}
Frank~E. Harrell and C.~E. Davis.
\newblock A new distribution-free quantile estimator.
\newblock \emph{Biometrika}, 69\penalty0 (3):\penalty0 635--640, 1982.
\newblock ISSN 00063444, 14643510.

\bibitem[Hollmann et~al.(2025)Hollmann, M{\"{u}}ller, Purucker, Krishnakumar, K{\"{o}}rfer, Hoo, Schirrmeister, and Hutter]{tabpfn}
Noah Hollmann, Samuel M{\"{u}}ller, Lennart Purucker, Arjun Krishnakumar, Max K{\"{o}}rfer, Shi~Bin Hoo, Robin~Tibor Schirrmeister, and Frank Hutter.
\newblock Accurate predictions on small data with a tabular foundation model.
\newblock \emph{Nat.}, 637\penalty0 (8044):\penalty0 319--326, 2025.
\newblock \doi{10.1038/S41586-024-08328-6}.

\bibitem[Holtzman et~al.(2020)Holtzman, Buys, Du, Forbes, and Choi]{top_p}
Ari Holtzman, Jan Buys, Li~Du, Maxwell Forbes, and Yejin Choi.
\newblock The curious case of neural text degeneration.
\newblock In \emph{8th International Conference on Learning Representations, {ICLR} 2020, Addis Ababa, Ethiopia, April 26-30, 2020}. OpenReview.net, 2020.

\bibitem[IEEE(2019)]{ieee754}
IEEE.
\newblock Ieee standard for floating-point arithmetic.
\newblock \emph{IEEE Std 754-2019 (Revision of IEEE 754-2008)}, pp.\  1--84, 2019.
\newblock \doi{10.1109/IEEESTD.2019.8766229}.

\bibitem[Imani \& White(2018)Imani and White]{distributional_losses}
Ehsan Imani and Martha White.
\newblock Improving regression performance with distributional losses.
\newblock In Jennifer~G. Dy and Andreas Krause (eds.), \emph{Proceedings of the 35th International Conference on Machine Learning, {ICML} 2018, Stockholmsm{\"{a}}ssan, Stockholm, Sweden, July 10-15, 2018}, volume~80 of \emph{Proceedings of Machine Learning Research}, pp.\  2162--2171. {PMLR}, 2018.

\bibitem[Kingma \& Welling(2014)Kingma and Welling]{vae}
Diederik~P. Kingma and Max Welling.
\newblock Auto-encoding variational bayes.
\newblock In Yoshua Bengio and Yann LeCun (eds.), \emph{2nd International Conference on Learning Representations, {ICLR} 2014, Banff, AB, Canada, April 14-16, 2014, Conference Track Proceedings}, 2014.

\bibitem[Lampinen \& Vehtari(2001)Lampinen and Vehtari]{bnn1}
Jouko Lampinen and Aki Vehtari.
\newblock Bayesian approach for neural networks--review and case studies.
\newblock \emph{Neural Networks}, 14\penalty0 (3):\penalty0 257--274, 2001.
\newblock \doi{10.1016/S0893-6080(00)00098-8}.

\bibitem[Lehmann(1983)]{point_estimation}
L.E. Lehmann.
\newblock \emph{Theory of Point Estimation}.
\newblock A Wiley publication in mathematical statistics. Wiley, 1983.

\bibitem[Liu et~al.(2022)Liu, Lin, and Zach]{energy_based_regression2}
Xixi Liu, Che-Tsung Lin, and Christopher Zach.
\newblock Energy-based models for deep probabilistic regression.
\newblock In \emph{2022 26th International Conference on Pattern Recognition (ICPR)}, pp.\  2693--2699, 2022.
\newblock \doi{10.1109/ICPR56361.2022.9955636}.

\bibitem[Lukasik et~al.(2024)Lukasik, Narasimhan, Menon, Yu, and Kumar]{regression_aware}
Michal Lukasik, Harikrishna Narasimhan, Aditya~Krishna Menon, Felix Yu, and Sanjiv Kumar.
\newblock Regression aware inference with llms.
\newblock In Yaser Al{-}Onaizan, Mohit Bansal, and Yun{-}Nung Chen (eds.), \emph{Findings of the Association for Computational Linguistics: {EMNLP} 2024, Miami, Florida, USA, November 12-16, 2024}, pp.\  13667--13678. Association for Computational Linguistics, 2024.

\bibitem[Lukasik et~al.(2025)Lukasik, Meng, Narasimhan, Menon, Chang, Yu, and Kumar]{raft}
Michal Lukasik, Zhao Meng, Harikrishna Narasimhan, Aditya~Krishna Menon, Yin~Wen Chang, Felix~X. Yu, and Sanjiv Kumar.
\newblock Better autoregressive regression with {LLM}s.
\newblock In \emph{The Thirteenth International Conference on Learning Representations, {ICLR} 2025, Singapore, Singapore, April 24-28, 2025}. OpenReview.net, 2025.

\bibitem[Mahan et~al.(2024)Mahan, Phung, Rafailov, Blagden, Lile, Castricato, Fr{\"{a}}nken, Finn, and Albalak]{gen_rm}
Dakota Mahan, Duy Phung, Rafael Rafailov, Chase Blagden, Nathan Lile, Louis Castricato, Jan{-}Philipp Fr{\"{a}}nken, Chelsea Finn, and Alon Albalak.
\newblock Generative reward models.
\newblock \emph{CoRR}, abs/2410.12832, 2024.
\newblock \doi{10.48550/ARXIV.2410.12832}.

\bibitem[Neal(1992)]{belief_nets}
Radford~M. Neal.
\newblock Connectionist learning of belief networks.
\newblock \emph{Artif. Intell.}, 56\penalty0 (1):\penalty0 71--113, 1992.
\newblock \doi{10.1016/0004-3702(92)90065-6}.

\bibitem[Nguyen et~al.(2024)Nguyen, Zhang, Yang, Lee, Bornschein, Miao, Perel, Chen, and Song]{llm_embeddings_neural_process}
Tung Nguyen, Qiuyi Zhang, Bangding Yang, Chansoo Lee, Jorg Bornschein, Yingjie Miao, Sagi Perel, Yutian Chen, and Xingyou Song.
\newblock Predicting from strings: Language model embeddings for bayesian optimization.
\newblock \emph{CoRR}, abs/2410.10190, 2024.
\newblock \doi{10.48550/ARXIV.2410.10190}.

\bibitem[Nogueira et~al.(2021)Nogueira, Jiang, and Lin]{arithmetic_serialization}
Rodrigo~Frassetto Nogueira, Zhiying Jiang, and Jimmy Lin.
\newblock Investigating the limitations of the transformers with simple arithmetic tasks.
\newblock \emph{CoRR}, abs/2102.13019, 2021.

\bibitem[Parzen(1962)]{density_estimation_parzen}
Emanuel Parzen.
\newblock {On Estimation of a Probability Density Function and Mode}.
\newblock \emph{The Annals of Mathematical Statistics}, 33\penalty0 (3):\penalty0 1065 -- 1076, 1962.
\newblock \doi{10.1214/aoms/1177704472}.

\bibitem[Qin(2018)]{hamming}
Minghai Qin.
\newblock Hamming-distance-based binary representation of numbers.
\newblock In \emph{2018 IEEE International Symposium on Information Theory (ISIT)}, pp.\  2202--2205, 2018.
\newblock \doi{10.1109/ISIT.2018.8437644}.

\bibitem[Rasmussen(1999)]{infinite_gaussian_mixture}
Carl~Edward Rasmussen.
\newblock The infinite gaussian mixture model.
\newblock In Sara~A. Solla, Todd~K. Leen, and Klaus{-}Robert M{\"{u}}ller (eds.), \emph{Advances in Neural Information Processing Systems 12, {[NIPS} Conference, Denver, Colorado, USA, November 29 - December 4, 1999]}, pp.\  554--560. The {MIT} Press, 1999.

\bibitem[Rosenblatt(1956)]{density_estimation_rosenblatt}
Murray Rosenblatt.
\newblock {Remarks on Some Nonparametric Estimates of a Density Function}.
\newblock \emph{The Annals of Mathematical Statistics}, 27\penalty0 (3):\penalty0 832 -- 837, 1956.
\newblock \doi{10.1214/aoms/1177728190}.

\bibitem[Song et~al.(2024)Song, Li, Lee, Yang, Peng, Perel, and Chen]{omnipred}
Xingyou Song, Oscar Li, Chansoo Lee, Bangding Yang, Daiyi Peng, Sagi Perel, and Yutian Chen.
\newblock Omnipred: Language models as universal regressors.
\newblock \emph{CoRR}, abs/2402.14547, 2024.

\bibitem[Tang et~al.(2024)Tang, Yang, and Song]{llm_embeddings}
Eric Tang, Bangding Yang, and Xingyou Song.
\newblock Understanding {LLM} embeddings for regression.
\newblock \emph{CoRR}, abs/2411.14708, 2024.
\newblock \doi{10.48550/ARXIV.2411.14708}.

\bibitem[Tang \& Salakhutdinov(2013)Tang and Salakhutdinov]{stochastic_feedforward_nns}
Yichuan Tang and Ruslan Salakhutdinov.
\newblock Learning stochastic feedforward neural networks.
\newblock In Christopher J.~C. Burges, L{\'{e}}on Bottou, Zoubin Ghahramani, and Kilian~Q. Weinberger (eds.), \emph{Advances in Neural Information Processing Systems 26: 27th Annual Conference on Neural Information Processing Systems 2013. Proceedings of a meeting held December 5-8, 2013, Lake Tahoe, Nevada, United States}, pp.\  530--538, 2013.

\bibitem[Teh et~al.(2003)Teh, Welling, Osindero, and Hinton]{energy_based}
Yee~Whye Teh, Max Welling, Simon Osindero, and Geoffrey~E. Hinton.
\newblock Energy-based models for sparse overcomplete representations.
\newblock \emph{J. Mach. Learn. Res.}, 4:\penalty0 1235--1260, 2003.

\bibitem[Titterington(2004)]{bnn2}
D.~M. Titterington.
\newblock {Bayesian Methods for Neural Networks and Related Models}.
\newblock \emph{Statistical Science}, 19\penalty0 (1):\penalty0 128 -- 139, 2004.
\newblock \doi{10.1214/088342304000000099}.

\bibitem[Vacareanu et~al.(2024)Vacareanu, Negru, Suciu, and Surdeanu]{icl_regression}
Robert Vacareanu, Vlad{-}Andrei Negru, Vasile Suciu, and Mihai Surdeanu.
\newblock From words to numbers: Your large language model is secretly {A} capable regressor when given in-context examples.
\newblock \emph{CoRR}, abs/2404.07544, 2024.

\bibitem[Vanschoren et~al.(2013)Vanschoren, van Rijn, Bischl, and Torgo]{openml}
Joaquin Vanschoren, Jan~N. van Rijn, Bernd Bischl, and Lu{\'{\i}}s Torgo.
\newblock Openml: networked science in machine learning.
\newblock \emph{{SIGKDD} Explor.}, 15\penalty0 (2):\penalty0 49--60, 2013.
\newblock \doi{10.1145/2641190.2641198}.

\bibitem[Vaswani et~al.(2017)Vaswani, Shazeer, Parmar, Uszkoreit, Jones, Gomez, Kaiser, and Polosukhin]{transformer}
Ashish Vaswani, Noam Shazeer, Niki Parmar, Jakob Uszkoreit, Llion Jones, Aidan~N. Gomez, Lukasz Kaiser, and Illia Polosukhin.
\newblock Attention is all you need.
\newblock In Isabelle Guyon, Ulrike von Luxburg, Samy Bengio, Hanna~M. Wallach, Rob Fergus, S.~V.~N. Vishwanathan, and Roman Garnett (eds.), \emph{Advances in Neural Information Processing Systems 30: Annual Conference on Neural Information Processing Systems 2017, December 4-9, 2017, Long Beach, CA, {USA}}, pp.\  5998--6008, 2017.

\bibitem[Wydmuch et~al.(2018)Wydmuch, Jasinska, Kuznetsov, Busa{-}Fekete, and Dembczynski]{extreme_softmax_classification}
Marek Wydmuch, Kalina Jasinska, Mikhail Kuznetsov, R{\'{o}}bert Busa{-}Fekete, and Krzysztof Dembczynski.
\newblock A no-regret generalization of hierarchical softmax to extreme multi-label classification.
\newblock In Samy Bengio, Hanna~M. Wallach, Hugo Larochelle, Kristen Grauman, Nicol{\`{o}} Cesa{-}Bianchi, and Roman Garnett (eds.), \emph{Advances in Neural Information Processing Systems 31: Annual Conference on Neural Information Processing Systems 2018, NeurIPS 2018, December 3-8, 2018, Montr{\'{e}}al, Canada}, pp.\  6358--6368, 2018.

\bibitem[Zhang et~al.(2024)Zhang, Hosseini, Bansal, Kazemi, Kumar, and Agarwal]{gen_rm2}
Lunjun Zhang, Arian Hosseini, Hritik Bansal, Mehran Kazemi, Aviral Kumar, and Rishabh Agarwal.
\newblock Generative verifiers: Reward modeling as next-token prediction.
\newblock \emph{CoRR}, abs/2408.15240, 2024.
\newblock \doi{10.48550/ARXIV.2408.15240}.

\bibitem[Zheng et~al.(2023)Zheng, Chiang, Sheng, Zhuang, Wu, Zhuang, Lin, Li, Li, Xing, Zhang, Gonzalez, and Stoica]{llm_judge}
Lianmin Zheng, Wei{-}Lin Chiang, Ying Sheng, Siyuan Zhuang, Zhanghao Wu, Yonghao Zhuang, Zi~Lin, Zhuohan Li, Dacheng Li, Eric~P. Xing, Hao Zhang, Joseph~E. Gonzalez, and Ion Stoica.
\newblock Judging llm-as-a-judge with mt-bench and chatbot arena.
\newblock In \emph{Advances in Neural Information Processing Systems 36: Annual Conference on Neural Information Processing Systems 2023, NeurIPS 2023, New Orleans, LA, USA, December 10 - 16, 2023}, 2023.

\bibitem[Ziegler et~al.(2019)Ziegler, Stiennon, Wu, Brown, Radford, Amodei, Christiano, and Irving]{rlhf}
Daniel~M. Ziegler, Nisan Stiennon, Jeffrey Wu, Tom~B. Brown, Alec Radford, Dario Amodei, Paul~F. Christiano, and Geoffrey Irving.
\newblock Fine-tuning language models from human preferences.
\newblock \emph{CoRR}, abs/1909.08593, 2019.

\end{thebibliography}
\bibliographystyle{tmlr}

\clearpage
\onecolumn
\appendix
\section*{Appendix}

\section{Additional Experiments}

\subsection{Data Scaling: Extended}
\label{appendix:data_scaling}
For completeness, we display the plots over all tasks in AMLB \citep{amlb}. We confirm the data-efficiency of the decoder head against the Riemann distribution head on nearly every regression task. Furthermore, we observe numerous cases where both distributional methods outperform the pointwise head, especially in low data regimes.

\begin{figure}[h]
    \begin{center}
        \includegraphics[width=0.99\textwidth]{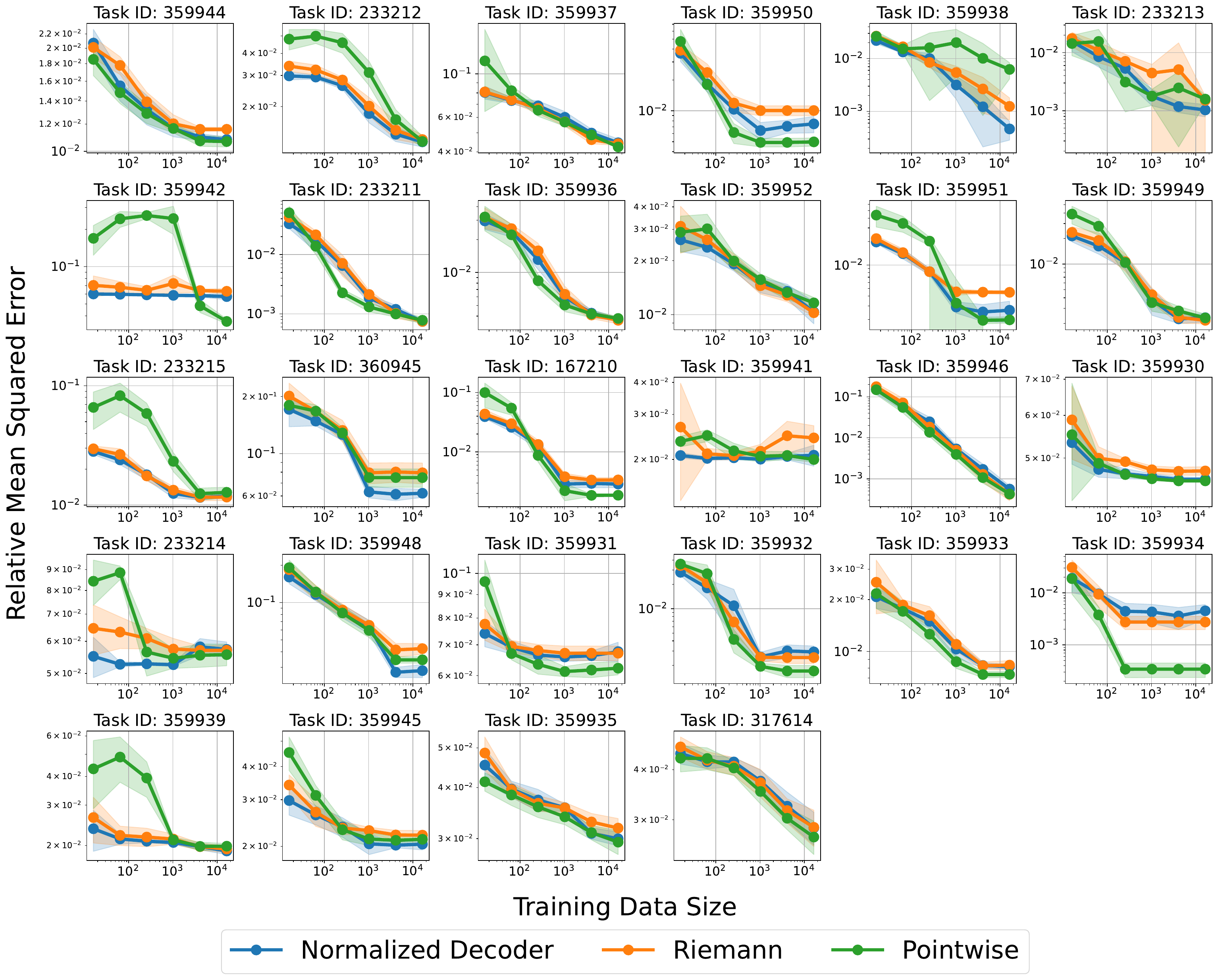}    
    \end{center} 
    \caption{Lower ($\downarrow$) is better. Regression performance as a function of training data scaling between using the normalized decoder vs. Reimannian distribution as regression heads. Each point was averaged over 10 training runs over random combinations of datapoints from the original AMLB task's training set.}
 \label{fig:full_data_scaling_amlb} 
\end{figure}

\clearpage

\subsection{BBOB Curve Fitting: Extended}
\label{appendix:bbob_curve_fitting}
In Figure \ref{fig:bbob_extended_fitting}, we compare the curve fitting properties of multiple regression heads. We see overall that the decoder head is competitive and has both pros and cons for specific function landscapes from the BBOB benchmark.

\begin{figure}[h]
    \begin{center}
        \includegraphics[width=0.99\textwidth]{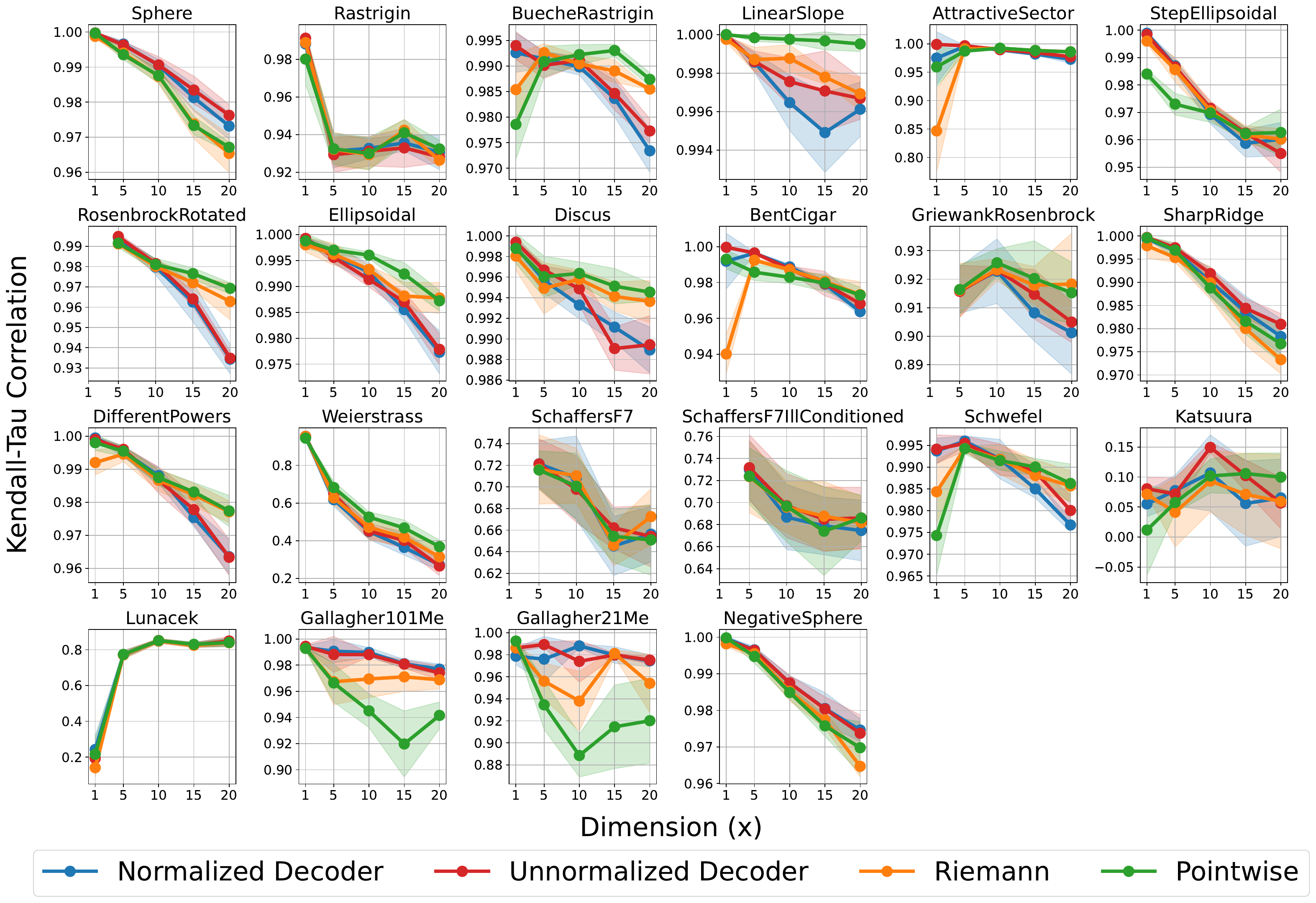}    
    \end{center} 
    \caption{Higher ($\uparrow$) is better. Extended results from Table \ref{tab:bbob_comparison} in the main body. Regression performance as a function of input dimension over BBOB functions using Kendall-Tau correlation. Each point was averaged over 10 training runs, each with 100K training points $(x,y)$ where each $x$ is sampled uniformly from $[-5,5]$ coordinate-wise. \textbf{Note:} Some functions such as RosenbrockRotated or GriewankRosenbrock are undefined when dimension is 1, so we skip those points.}
 \label{fig:bbob_extended_fitting} 
\end{figure}

\clearpage

\subsection{Individual OpenML Kendall-Taus}
\label{appendix:individual_openml}
In Figure \ref{fig:individual_openml}, we present full results on the AMLB and OpenML-CTR23 regression benchmarks, over all regression heads. We see that both the unnormalized and normalized decoder heads remain competitive throughout the benchmarks.

\begin{figure}[h]
    \begin{center}
        \includegraphics[width=0.99\textwidth]{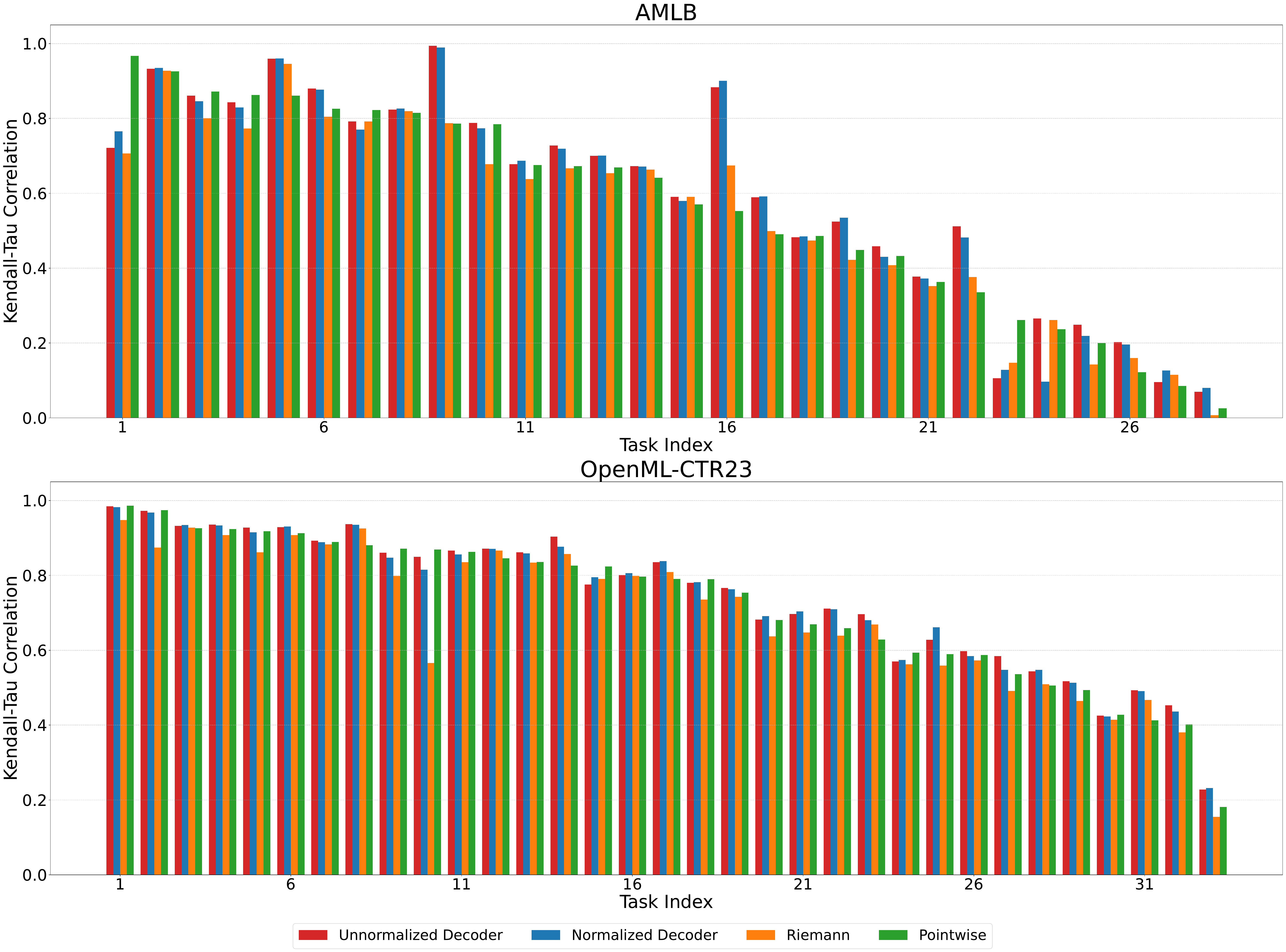}    
    \end{center} 
    \caption{Higher ($\uparrow$) is better. Extended results from Figure \ref{fig:amlb_and_openml}, but for all four regression heads on all tasks. Task IDs sorted by pointwise head performance.}
 \label{fig:individual_openml} 
\end{figure}

\clearpage

\subsection{Alternative Tokenization Schemes: Hamming-Distance}
\label{appendix:different_tokenization}
One possible criticism of the default tree-based tokenization in the normalized decoding case, is the vulnerability to small changes in the left-most significant tokens, which can cause large numeric changes in the actual number. \citet{hamming} notes this and proposes an alternative ``Hamming Distance-based'' binary representation which is robust to bitwise edits, and upper bounds the possible distortion $\lvert y' - y\rvert$ as a function of the edit distance between the Hamming representations of $y'$ and $y$. For example, if the binary length is 3, the representation for all integers $\{0, 1, \ldots, 2^3\}$ is $\{(000), (001), (010), (100), (011), (101), (110), (111)\}$ which can also be used in the normalized case for $\{0 / 2^3, 1/2^3, \ldots, 7/2^3\} \subset [0,1]$. In Figure \ref{fig:tree_vs_hamming}, we show however, such a representation may \textit{not} lead to better regression results, which we hypothesize is due to this representation being more difficult to learn.

\begin{figure}[h]
    \begin{center}
        \includegraphics[width=0.8\textwidth]{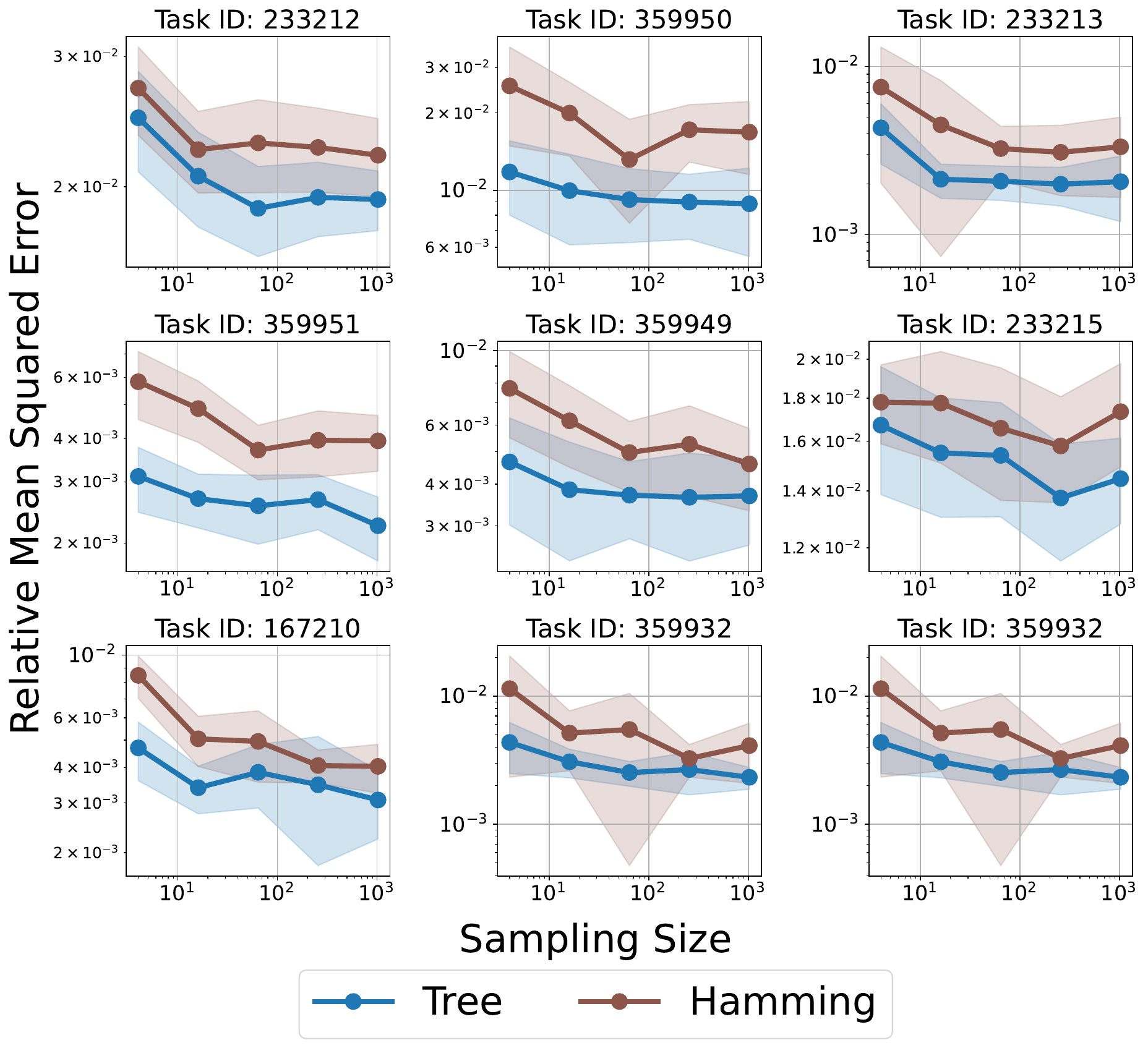}    
    \end{center} 
    \caption{Lower ($\downarrow$) is better. Regression performance while vary sampling size from $y \sim p_{\theta}(\cdot | x)$ using binary tree-based tokenization vs. Hamming representation on normalized decoder with mean aggregation. Each point was averaged over 10 training runs over random size-1000 combinations of the original AMLB task's training data points.}
 \label{fig:tree_vs_hamming} 
\end{figure}

\clearpage

\subsection{Full UCI Density Estimation Results}
\label{appendix:uci}
We obtained the datasets from \url{https://github.com/treforevans/uci_datasets} which preprocessed and scraped the data from the official UCI website at \url{https://archive.ics.uci.edu/}.

\begin{table}[h]
\centering
\setlength{\tabcolsep}{10pt} % Adjust this value as needed
\scalebox{0.84}{ % Adjust the scaling factor as needed
\begin{tabular}{l *{5}{c}} % 8 columns, first left, rest centered
\toprule
Dataset & Mixture Density Network & Unnormalized Decoder & Normalized Decoder & Riemann \\
\midrule
Airfoil   & \bf{0.12 $\pm$ 0.11} & 0.40 $\pm$ 0.01 & 0.34 $\pm$ 0.01 & 1.33 $\pm$ 0.14 \\
AutoMPG   & \bf{0.21 $\pm$ 0.07} & 0.32 $\pm$ 0.03 & 0.41 $\pm$ 0.05 & 1.62 $\pm$ 0.17 \\
Autos     & \bf{0.32 $\pm$ 0.23} & 0.48 $\pm$ 0.05 & 0.47 $\pm$ 0.07 & \red{2.60 $\pm$ 0.76} \\ 
Bike      & \red{4.59 $\pm$ 0.86} & 0.12 $\pm$ 0.00 & \bf{0.10 $\pm$ 0.01} & 0.36 $\pm$ 0.05 \\
BreastCancer & \bf{0.32 $\pm$ 0.09} & 0.48 $\pm$ 0.05 & 0.64 $\pm$ 0.03 & \red{2.85 $\pm$ 0.37} \\
Challenger & \bf{-0.29 $\pm$ 0.66} & 0.14 $\pm$ 0.14 & 0.06 $\pm$ 0.08 & 0.87 $\pm$ 0.77 \\
Concrete  & \bf{0.15 $\pm$ 0.05} & 0.43 $\pm$ 0.03 & 0.41 $\pm$ 0.04 & 1.67 $\pm$ 0.20 \\
Elevators & 0.30 $\pm$ 0.43 & 0.15 $\pm$ 0.00 & \bf{0.13 $\pm$ 0.00} & 1.12 $\pm$ 0.02 \\
Energy    & 0.40 $\pm$ 0.14 & 0.17 $\pm$ 0.03 & \bf{0.16 $\pm$ 0.05} & 0.38 $\pm$ 0.20 \\ 
Fertility & \bf{-0.06 $\pm$ 0.16} & 0.31 $\pm$ 0.09 & 0.46 $\pm$ 0.13 & \red{2.41 $\pm$ 0.61} \\

Gas       & 0.68 $\pm$ 0.25 & \bf{0.02 $\pm$ 0.01} & \bf{0.02 $\pm$ 0.00} & 0.20 $\pm$ 0.09 \\
Housing   & \bf{0.22 $\pm$ 0.13} & 0.41 $\pm$ 0.03 & 0.38 $\pm$ 0.03 & 1.56 $\pm$ 0.21 \\
KeggDirected & \red{2.41 $\pm$ 1.10} & \bf{0.05 $\pm$ 0.00} & \bf{0.05 $\pm$ 0.00} & 0.22 $\pm$ 0.02 \\
Kin 40K    & \red{7.49 $\pm$ 0.73} & 0.19 $\pm$ 0.01 & \bf{0.12 $\pm$ 0.01} & 0.39 $\pm$ 0.03 \\

Parkinsons  & 0.59 $\pm$ 0.18 & 0.40 $\pm$ 0.02 & \bf{0.39 $\pm$ 0.03} & 1.40 $\pm$ 0.33 \\
Pol         & 1.49 $\pm$ 0.41 & \bf{0.01 $\pm$ 0.00} & \bf{0.01 $\pm$ 0.00} & 0.18 $\pm$ 0.02 \\
Protein     & 1.07 $\pm$ 0.44 & \bf{0.34 $\pm$ 0.00} & 0.41 $\pm$ 0.01 & 1.55 $\pm$ 0.04 \\
Pumadyn32nm & 0.69 $\pm$ 1.26 & \bf{0.55 $\pm$ 0.00} & 0.58 $\pm$ 0.02 & \red{2.32 $\pm$ 0.03} \\
Slice     & \red{7.09 $\pm$ 0.09} & 0.05 $\pm$ 0.00 & \bf{0.02 $\pm$ 0.00} & 0.08 $\pm$ 0.02 \\
SML       & 1.31 $\pm$ 0.59 & 0.21 $\pm$ 0.01 & \bf{0.11 $\pm$ 0.01} & 0.35 $\pm$ 0.03 \\
Solar     & \bf{-1.40 $\pm$ 0.29} & 0.04 $\pm$ 0.01 & 0.04 $\pm$ 0.01 & 0.61 $\pm$ 0.12 \\
Stock     & \bf{-0.15 $\pm$ 0.15} & 0.27 $\pm$ 0.04 & 0.32 $\pm$ 0.04 & 1.63 $\pm$ 0.46 \\
TamiElectric & \bf{0.01 $\pm$ 0.00} & 0.46 $\pm$ 0.00 & 0.69 $\pm$ 0.00 & \red{2.70 $\pm$ 0.00} \\
Wine      & \bf{0.05 $\pm$ 0.12} & 0.24 $\pm$ 0.01 & 0.21 $\pm$ 0.01 & 1.67 $\pm$ 0.14 \\
Yacht     & \bf{0.21 $\pm$ 0.10} & 0.39 $\pm$ 0.02 & 0.23 $\pm$ 0.05 & 1.29 $\pm$ 0.38 \\
% Data rows will go here
\bottomrule
\end{tabular}}
\caption{Lower ($\downarrow$) is better. Avg. NLL ($\pm$ StdDev) of test examples on UCI datasets over 10 train-test splits.}
\label{tab:test_nll_full}
\end{table}

\clearpage

\subsection{Density Estimation Visualization: Extended}
\label{appendix:density_estimation_visualization}
In Figure \ref{fig:density_visualization_full}, we present further results on density estimation with various decoder sampling techniques (top-$k$, top-$p$, low temperature) alongside MDN and Riemann baselines. We see that using vanilla temperature sampling for the decoder is optimal and unbiased for capturing the shapes of all problems.

\begin{figure}[h]
    \begin{center}
        \includegraphics[width=0.78\textwidth]{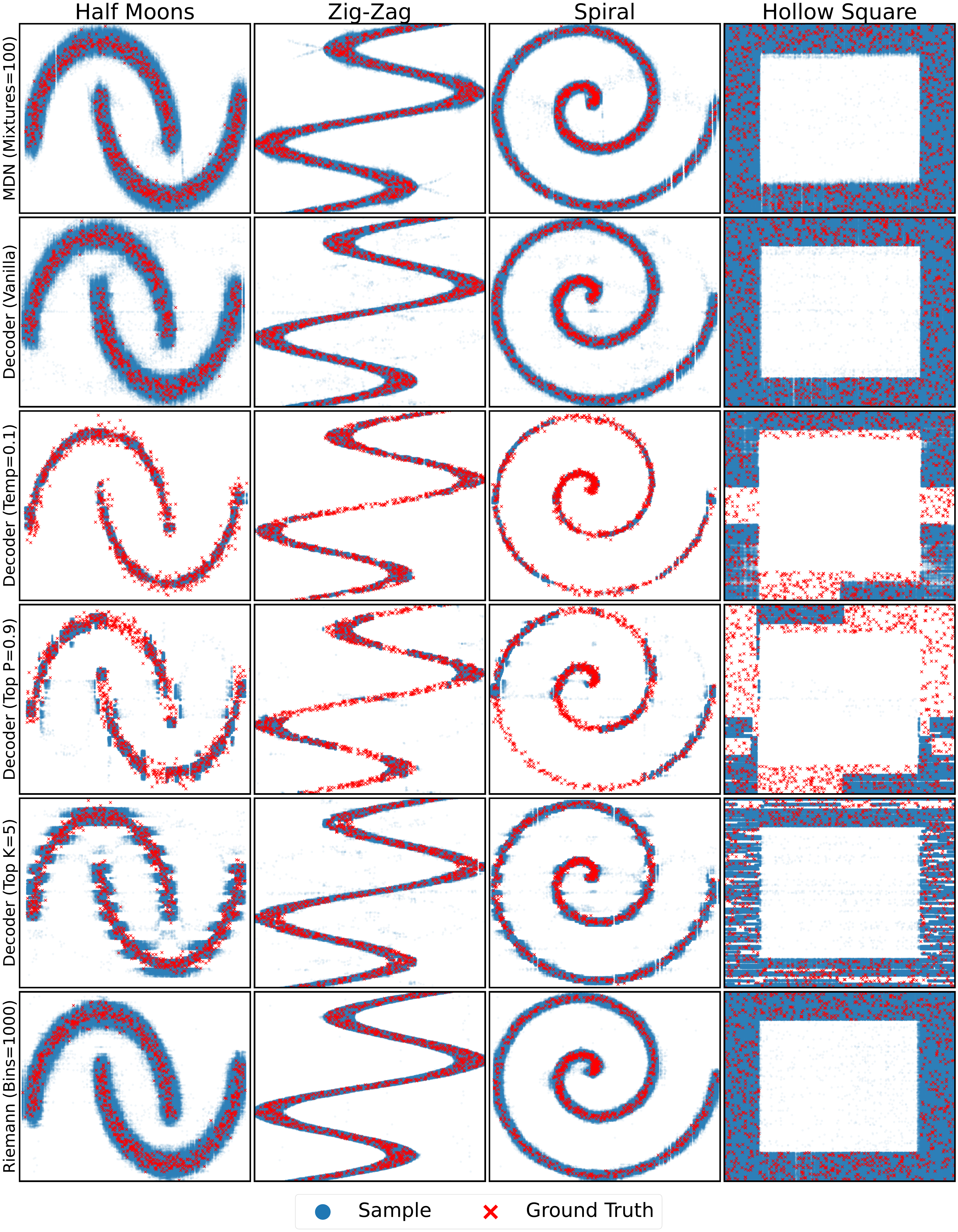}    
    \end{center} 
    \caption{Visualizing density estimation of $p(y|x)$ on 1D problems. We used an unnormalized decoder with $(B=10, E=1, M = 5)$. Note that these results occur regardless of xy-scales, which are omitted for brevity.}
 \label{fig:density_visualization_full} 
\end{figure}

\clearpage

\section{Extended Theory}
\label{appendix:theory}
\begin{proof}[Proof of Theorem~\ref{theo:histogram}]
Firstly, we observe that
\begin{align*}
\mathop{\operatorname{argmin}}_\theta \frac{1}{N} \sum_{n=1}^N -\log p_\theta(\lambda_K(Y_n)) &= \mathop{\operatorname{argmin}}_\theta 
 H(\widetilde{f}_N^K, p_\theta),
\end{align*}
where $\widetilde{f}_N^k$ is a discrete distribution that tracks the fraction of samples $\{Y_n\}_{n=1}^{N}$ that fall within each of the $2^k$ uniformly-spaced bins in $[0, 1]$. Formally,
\begin{align*}
\widetilde{f}_N^k((b_1, \dots, b_k)) &= \frac{1}{N} \sum_{n=1}^N \mathbf{1}(\lambda_k(Y_n) = (b_1, \dots, b_k)).
\end{align*}
Conditioned on the samples $\{Y_n\}_{n=1}^{N}$, $\widetilde{f}_N^k$ is a distribution on $k$-bit strings, and so by the $K$-bit universality assumption, $p_{\theta^*}^K = \widetilde{f}_N^K$. It follows that $p_{\theta^*}^k = \widetilde{f}_N^k \; \forall k \leq K$ since if two discrete distributions are equal so are any of their marginals. Then $f_N^{k*}(x) \equiv 2^k p_{\theta^*}^k(\lambda_k(x)) = 2^k \widetilde{f}_N^k(\lambda_k(x))$ lines up exactly as a $2^k$-bin histogram estimator for $f$, for all $k \leq K$. 

Now, we can treat the problem as one of histogram estimation.
Let's consider a fixed $k$. 
We first observe that the risk can be written as the sum of a squared bias term and a variance one. Specifically,
\begin{align*}
R(f, f_N^{k*}) = \int_0^1 \text{Bias}(y)^2 dy + \int_0^1 \text{Variance}(y)dy,
\end{align*}
where
$\text{Bias}(y) = \E[f_N^{k*}(y)] - f(y)$ and $\text{Variance}(y) = \V(f_N^{k*}(y))$ is the bias and variance of $f_N^{k*}(y)$ at fixed $y$ respectively.

Now, label bins $\{B_j\}_{j=0}^{2^k-1}$, where $B_j = [j \varepsilon, (j+1)\varepsilon)$ and $\varepsilon=2^{-k}$ is the bin width. Let $p_j = \int_{B_j} f(z)dz$ be the true probability mass in bin $B_j$. 
With $N_j$ as the number of samples in $B_j$, the expected value of the estimator for $y \in B_j$ is $\E[f_N^{k*}(y)] = \E[N_j / (N\varepsilon)] = (N p_j) / (N\varepsilon) = p_j / \varepsilon$.

Assume the true density $f$ is twice continuously differentiable on $[0, 1]$ (i.e., $f \in C^2([0, 1])$). This implies $f$, $f'$, and $f''$ are bounded on $[0, 1]$. Let $M_1 = \sup_{y \in [0,1]} |f'(y)|$ and $M_2 = \sup_{y \in [0,1]} |f''(y)|$.

\textbf{Bias Analysis:}
Let $y_j = (j+1/2)\varepsilon$ be the midpoint of bin $B_j$. For $z \in B_j$, by Taylor's Theorem around $y_j$:
$f(z) = f(y_j) + (z - y_j)f'(y_j) + \frac{(z - y_j)^2}{2} f''(\xi_z)$ for some $\xi_z$ between $z$ and $y_j$.
Integrating over $B_j$:
\begin{align*}
p_j = \int_{B_j} f(z) dz &= \int_{B_j} \left[ f(y_j) + (z - y_j)f'(y_j) + \frac{(z - y_j)^2}{2} f''(\xi_z) \right] dz \\
&= f(y_j) \int_{B_j} dz + f'(y_j) \int_{B_j} (z - y_j) dz + \int_{B_j} \frac{(z - y_j)^2}{2} f''(\xi_z) dz \\
&= \varepsilon f(y_j) + 0 + R_j,
\end{align*}
where the remainder term $R_j = \int_{B_j} \frac{(z - y_j)^2}{2} f''(\xi_z) dz$. Since $|z-y_j| \leq \varepsilon/2$ and $|f''(\xi_z)| \leq M_2$, we have $|R_j| \leq \int_{B_j} \frac{(\varepsilon/2)^2}{2} M_2 dz = \frac{M_2 \varepsilon^2}{8} \int_{B_j} dz = \frac{M_2 \varepsilon^3}{8}$. Thus, $R_j = \mathcal{O}(\varepsilon^3)$.

The bias for $y \in B_j$ is $\text{Bias}(y) = \E[f_N^{k*}(y)] - f(y) = \frac{p_j}{\varepsilon} - f(y) = \frac{\varepsilon f(y_j) + R_j}{\varepsilon} - f(y) = f(y_j) + \frac{R_j}{\varepsilon} - f(y)$.
Expanding $f(y)$ around $y_j$: $f(y) = f(y_j) + (y - y_j)f'(y_j) + \frac{(y - y_j)^2}{2} f''(\eta_y)$ for $\eta_y$ between $y$ and $y_j$.
\begin{align*}
\text{Bias}(y) &= f(y_j) + \mathcal{O}(\varepsilon^2) - \left[ f(y_j) + (y - y_j)f'(y_j) + \mathcal{O}(\varepsilon^2) \right] \\
&= -(y - y_j)f'(y_j) + \mathcal{O}(\varepsilon^2).
\end{align*}
Now, integrate the squared bias over bin $B_j$:
\begin{align*}
\int_{B_j} \text{Bias}(y)^2 dy &= \int_{B_j} \left[ -(y - y_j)f'(y_j) + \mathcal{O}(\varepsilon^2) \right]^2 dy \\
&= \int_{B_j} \left[ (y - y_j)^2 (f'(y_j))^2 - 2(y - y_j)f'(y_j)\mathcal{O}(\varepsilon^2) + \mathcal{O}(\varepsilon^4) \right] dy \\
&= (f'(y_j))^2 \int_{B_j} (y - y_j)^2 dy - \mathcal{O}(\varepsilon^2) f'(y_j)\int_{B_j} (y - y_j) dy + \int_{B_j} \mathcal{O}(\varepsilon^4) dy \\
&= (f'(y_j))^2 \int_{-\varepsilon/2}^{\varepsilon/2} u^2 du - \mathcal{O}(\varepsilon^2) f'(y_j) \cdot 0 + \mathcal{O}(\varepsilon^4) \cdot \varepsilon \quad (\text{let } u=y-y_j) \\
&= (f'(y_j))^2 \left[ \frac{u^3}{3} \right]_{-\varepsilon/2}^{\varepsilon/2} + \mathcal{O}(\varepsilon^5) \\
&= (f'(y_j))^2 \frac{\varepsilon^3}{12} + \mathcal{O}(\varepsilon^5).
\end{align*}
Summing over all bins:
\begin{align*}
\int_0^1 \text{Bias}(y)^2 dy &= \sum_{j=0}^{2^k-1} \int_{B_j} \text{Bias}(y)^2 dy = \sum_{j=0}^{2^k-1} \left[ (f'(y_j))^2 \frac{\varepsilon^3}{12} + \mathcal{O}(\varepsilon^5) \right] \\
&= \frac{\varepsilon^2}{12} \sum_{j=0}^{2^k-1} (f'(y_j))^2 \varepsilon + \sum_{j=0}^{2^k-1} \mathcal{O}(\varepsilon^5) \\
&= \frac{\varepsilon^2}{12} \left( \int_0^1 (f'(y))^2 dy + \mathcal{O}(\varepsilon^2) \right) + 2^k \mathcal{O}(\varepsilon^5)\\
&= \frac{\varepsilon^2}{12} \int_0^1 (f'(y))^2 dy + \mathcal{O}(\varepsilon^4),
\end{align*}
where the third line uses a known approximation error for the Riemann sum with midpoint rule applied to $(f')^2 \in C^1$.

\textbf{Variance Analysis:}
The variance for $y \in B_j$ is $\text{Variance}(y) = \V(f_N^{k*}(y)) = \V(N_j / (N\varepsilon)) = \frac{1}{(N\varepsilon)^2} \V(N_j)$.
Since $N_j \sim \text{Binomial}(N, p_j)$, $\V(N_j) = N p_j (1 - p_j)$.
\begin{align*}
\text{Variance}(y) &= \frac{N p_j (1 - p_j)}{N^2 \varepsilon^2} = \frac{p_j (1 - p_j)}{N \varepsilon^2} \\
&= \frac{(\varepsilon f(y_j) + \mathcal{O}(\varepsilon^3))(1 - \varepsilon f(y_j) - \mathcal{O}(\varepsilon^3))}{N \varepsilon^2} \\
&= \frac{\varepsilon f(y_j) - \varepsilon^2 f(y_j)^2 + \mathcal{O}(\varepsilon^3)}{N \varepsilon^2} \\
&= \frac{f(y_j)}{N \varepsilon} - \frac{f(y_j)^2}{N} + \mathcal{O}(\varepsilon/N).
\end{align*}
Integrating the variance:
\begin{align*}
\int_0^1 \text{Variance}(y) dy &= \sum_{j=0}^{2^k-1} \int_{B_j} \left( \frac{f(y_j)}{N \varepsilon} + \mathcal{O}(1/N) \right) dy \\
&= \sum_{j=0}^{2^k-1} \left( \frac{f(y_j) \varepsilon}{N \varepsilon} + \mathcal{O}(\varepsilon/N) \right) \\
&= \frac{1}{N\varepsilon} \sum_{j=0}^{2^k-1} f(y_j) \varepsilon + 2^k \mathcal{O}(\varepsilon/N) \\
&= \frac{1}{N\varepsilon} \left( \int_0^1 f(y) dy + \mathcal{O}(\varepsilon^2) \right) + \mathcal{O}(1/N) \quad (\text{Riemann sum error for } f \in C^2) \\
&= \frac{1}{N\varepsilon} (1 + \mathcal{O}(\varepsilon^2)) + \mathcal{O}(1/N) \\
&= \frac{1}{N\varepsilon} + \mathcal{O}(\varepsilon/N) + \mathcal{O}(1/N).
\end{align*}
Since we typically consider asymptotics where $N\to\infty$ and $\varepsilon\to 0$ such that $N\varepsilon\to\infty$, the dominant variance term is $1/(N\varepsilon)$.

\textbf{Total Risk:}
Combining the integrated squared bias and integrated variance:
\begin{align*}
R(f, f_N^{k*}) &= \int_0^1 \text{Bias}(y)^2 dy + \int_0^1 \text{Variance}(y) dy \\
&= \left( \frac{\varepsilon^2}{12} \int_0^1 (f'(y))^2 dy + \mathcal{O}(\varepsilon^4) \right) + \left( \frac{1}{N\varepsilon} + \mathcal{O}(\varepsilon/N) + \mathcal{O}(1/N) \right) \\
&= \frac{\varepsilon^2}{12} \int_0^1 (f'(y))^2 dy + \frac{1}{N\varepsilon} + \mathcal{O}(\varepsilon^4) + \mathcal{O}(1/N). \quad (\text{assuming } \varepsilon/N \text{ is smaller than } 1/N)
\end{align*}
Substituting $\varepsilon = 2^{-k}$:
\begin{align*}
R(f, f_N^{k*}) = \frac{2^{-2k}}{12} \int_0^1 (f'(y))^2 dy + \frac{2^k}{N} + \mathcal{O}(2^{-4k} + 1/N).
\end{align*}
This gives the asymptotic risk. The $\mathcal{O}(2^{-4k} + 1/N)$ term is negligible, and can be disregarded.
\end{proof}

\clearpage

\section{Exact Experimental Details}
\label{appendix:exact_experiment_details}

For all models, we sweeped the encoder (basic MLP with ReLU activation) by varying the number of layers within [2,3,4,5] and hidden units within [256, 512, 2048]. 

For $x$-normalization, we apply a mean and std scaling, i.e. $x \leftarrow (x - x_{mean}) / x_{std}$ where $x_{mean}, x_{std}$ are coordinate-wise mean and standard deviations over all $x$'s in the training set. The preprocessed tensor is then fed directly into the encoder.

For $y$-normalization, we apply min/max linear scaling, i.e. $y \leftarrow (y - y_{min}) / (y_{max} - y_{min})$ where $y_{min}, y_{max}$ are computed from the training set. This is applicable to models representing $[0,1]$ output range (i.e. Riemann and Normalized Decoder). For Pointwise and Mixture Density heads, we further apply a shift $y \leftarrow y - 0.5$ to center the values within $[-0.5, 0.5]$.

All models were trained with a maximum of 300 epochs. To prevent overfitting, we apply early stopping (patience=5) where the validation split is 0.1 on the training set. Adam learning rates were sweeped over [1e-4, 5e-4].

We further describe hyperparameters and sweeps for individual heads below:

\textbf{Pointwise:} Uses ReLU activations on every hidden layer.
\begin{itemize}
    \item Weight decay: [0.0, 0.1, 1.0]
\end{itemize}

\textbf{Unnormalized Decoder:} Uses vanilla temperature sampling.
\begin{itemize}
\item Base $B$: [4, 8, 10]
\item Exponent Digit Count $E$: [1, 2, 4]
\item Mantissa Digit Count $M$: [2, 4, 8]
\item Transformer size: (3 layers, 128 units, 4 heads) or (1 layer, 32 units, 1 head).
\end{itemize}

\textbf{Normalized Decoder:} Sampling same as unnormalized decoder.
\begin{itemize}
\item Base $B$: [2, 4, 8]
\item Length $K$: [4, 8, 6]
\item Transformer size: Same as unnormalized decoder.
\end{itemize}

\textbf{Riemann/Histogram Distribution:} We specify a bin count, which uniformly partitions the range $[0,1]$ into equally spaced bins. Output is parameterized using softmax.
\begin{itemize}
\item Bin Count: [16, 64, 256, 1024, 4096, 16384]
\end{itemize}

\textbf{Mixture Density Network:} Given a mixture count $M$, the distribution head consists of mixture $\pi_{M} \in \triangle^{M}$, mean $\mu_{M} \in \mathbb{R}^{M}$, and standard deviation $\sigma_{M} \in \mathbb{R}^M$. Mixtures were parameterized using softmax, while standard deviations were via $\text{ELU}(x) + 1$ activation to enforce positivity.

\begin{itemize}
\item Mixtures $M$: [1, 2, 5, 10, 20, 50, 1000]
\end{itemize}

\end{document}